\newcommand{\cmark}{\ding{51}}%
\newcommand{\xmark}{\ding{55}}%
\newtheorem{theorem}{Theorem}
\newtheorem{corollary}[theorem]{Corollary}
\newtheorem{lemma}[theorem]{Lemma}
\newtheorem{proposition}[theorem]{Proposition}
\newenvironment{proof}[1][Proof]{\textbf{#1.} }{\ \rule{0.5em}{0.5em}}
\DeclareMathOperator{\polylog}{{\rm polylog}}
\DeclareMathOperator*{\esssup}{ess\,sup}
\newcommand{\relu}{\text{ReLU}}
\newcommand{\abs}{\text{abs}}
\newcommand{\sign}{\text{sign}}
\newcommand{\m}{\mathbf{m}}
\newcommand{\n}{\mathbf{n}}
\newcommand{\x}{\mathbf{x}}
\title{Rational neural networks}
\author{
  Nicolas Boull\'e\\
  Mathematical Institute\\
  University of Oxford\\
  Oxford, OX2 6GG, UK \\
  \texttt{boulle@maths.ox.ac.uk}\\  
  \And
  Yuji Nakatsukasa\\
  Mathematical Institute\\
  University of Oxford\\
  Oxford, OX2 6GG, UK \\
  \texttt{nakatsukasa@maths.ox.ac.uk}\\
  \And
  Alex Townsend\\
  Department of Mathematics\\
  Cornell University\\
  Ithaca, NY 14853, USA\\
  \texttt{townsend@cornell.edu}
}
\begin{document}

\maketitle

\begin{abstract}
We consider neural networks with rational activation functions. The choice of the nonlinear activation function in deep learning architectures is crucial and heavily impacts the performance of a neural network. We establish optimal bounds in terms of network complexity and prove that rational neural networks approximate smooth functions more efficiently than ReLU networks with exponentially smaller depth. The flexibility and smoothness of rational activation functions make them an attractive alternative to ReLU, as we demonstrate with numerical experiments.
\end{abstract}

\section{Introduction}

Deep learning has become an important topic across many domains of science due to its recent success in image recognition, speech recognition, and drug discovery~\cite{hinton2012deep,krizhevsky2012imagenet,lecun2015deep,ma2015deep}. Deep learning techniques are based on neural networks, which contain a certain number of layers to perform several mathematical transformations on the input. A nonlinear transformation of the input determines the output of each layer in the neural network: $x\mapsto \sigma(Wx+b)$, where $W$ is a matrix called the weight matrix, $b$ is a bias vector, and $\sigma$ is a nonlinear function called the activation function (also called activation unit). The computational cost of training a neural network depends on the total number of nodes (size) and the number of layers (depth). A key question in designing deep learning architectures is the choice of the activation function to reduce the number of trainable parameters of the network while keeping the same approximation power~\cite{goodfellow2016deep}. 

While smooth activation functions such as sigmoid, logistic, or hyperbolic tangent are widely used, they suffer from the ``vanishing gradient problem''~\cite{bengio1994learning} because their derivatives are zero for large inputs. Neural networks based on polynomial activation functions are an alternative~\cite{cheng2018polynomial,daws2019polynomial, goyal2019learning, guarnieri1999multilayer,ma2005constructive,vecci1998learning}, but can be numerically unstable due to large gradients for large inputs~\cite{bengio1994learning}. Moreover, polynomials do not approximate non-smooth functions efficiently~\cite{trefethen2019approximation}, which can lead to optimization issues in classification problems. A popular choice of activation function is the Rectified Linear Unit (ReLU) defined as $\relu(x)=\max(x,0)$~\cite{jarrett2009best,nair2010rectified}. It has numerous advantages, such as being fast to evaluate and zero for many inputs~\cite{glorot2011deep}. Many theoretical studies characterize and understand the expressivity of shallow and deep ReLU neural networks from the perspective of approximation theory~\cite{devore1989optimal,liang2016deep,mhaskar1996neural,telgarsky2016benefits,yarotsky2017error}.

ReLU networks also suffer from drawbacks, which are most evident during training. The main disadvantage is that the gradient of ReLU is zero for negative real numbers. Therefore, its derivative is zero if the activation function is saturated~\cite{maas2013rectifier}. To tackle these issues, several adaptations to ReLU have been proposed such as Leaky ReLU~\cite{maas2013rectifier}, Exponential Linear Unit (ELU)~\cite{clevert2015fast}, Parametric Linear Unit (PReLU)~\cite{he2015delving}, and Scaled Exponential Linear Unit (SELU)~\cite{klambauer2017self}. These modifications outperform ReLU in image classification applications, and some of these activation functions have trainable parameters, which are learned by gradient descent at the same time as the weights and biases of the network.  To obtain significant benefits for image classification and partial differential equation (PDE) solvers, one can perform an exhaustive search over trainable activation functions constructed from standard units~\cite{jagtap2020adaptive,ramachandran2017searching}. However, most of the ``exotic'' activation functions in the literature are motivated by empirical results and are not supported by theoretical statements on their potentially improved approximation power over ReLU.

In this work, we study rational neural networks, which are neural networks with activation functions that are trainable rational functions. In~\cref{sec_th_result}, we provide theoretical statements quantifying the advantages of rational neural networks over ReLU networks. In particular, we remark that a composition of low-degree rational functions has a good approximation power but a relatively small number of trainable parameters. Therefore, we show that rational neural networks require fewer nodes and exponentially smaller depth than ReLU networks to approximate smooth functions to within a certain accuracy. This improved approximation power has practical consequences for large neural networks, given that a deep neural network is computationally expensive to train due to expensive gradient evaluations and slower convergence. The experiments conducted in~\cref{sec_experiments} demonstrate the potential applications of these rational networks for solving PDEs and Generative Adversarial Networks (GANs).\footnote{All code and hyper-parameters are publicly available at~\cite{boulleGit}.}
The practical implementation of rational networks is straightforward in the TensorFlow framework and consists of replacing the activation functions by trainable rational functions. Finally, we highlight the main benefits of rational networks: the fast approximation of functions, the trainability of the activation parameters, and the smoothness of the activation function.

\section{Rational neural networks}

We consider neural networks whose activation functions consist of rational functions with trainable coefficients $a_i$ and $b_j$, i.e., functions of the form:
\begin{equation} \label{eq_rational}
F(x) = \frac{P(x)}{Q(x)}=\frac{\sum_{i=0}^{r_P} a_ix^i}{\sum_{j=0}^{r_Q} b_jx^j}, \qquad a_P\neq0, \quad b_Q\neq 0,
\end{equation}
where $r_P$ and $r_Q$ are the polynomial degrees of the numerator and denominator, respectively.  We say that $F(x)$ is of type $(r_P,r_Q)$ and degree $\max(r_P,r_Q)$.

The use of rational functions in deep learning is motivated by the theoretical work of Telgarsky, who proved error bounds on the approximation of ReLU neural networks by high-degree rational functions and vice versa~\cite{telgarsky2017neural}. On the practical side, neural networks based on rational activation functions are considered by Molina et al.~\cite{molina2019pad}, who defined a safe Pad\'e Activation Unit (PAU) as
\[
F(x) = \frac{\sum_{i=0}^{r_P} a_ix^i}{1+|\sum_{j=1}^{r_Q} b_jx^j|}.
\]
The denominator is selected so that $F(x)$ does not have poles located on the real axis. PAU networks can learn new activation functions and are competitive with state-of-the-art neural networks for image classification. However, this choice results in a non-smooth activation function and makes the gradient expensive to evaluate during training. In a closely related work, Chen et al.~\cite{chen2018rational} propose high-degree rational activation functions in a neural network, which have benefits in terms of approximation power.  However, this choice can significantly increase the number of parameters in the network, causing the training stage to be computationally expensive. 

In this paper, we use low-degree rational functions as activation functions, which are then composed together by the neural network to build high-degree rational functions. In this way, we can leverage the approximation power of high-degree rational functions without making training expensive. We highlight the approximation power of rational networks and provide optimal error bounds to demonstrate that rational neural networks theoretically outperform ReLU networks. Motivated by our theoretical results, we consider rational activation functions of type $(3,2)$, i.e., $r_P=3$ and $r_Q=2$. This type appears naturally in the theoretical analysis due to the composition property of Zolotarev sign functions (see \cref{sec_approx_relu_rat}): the degree of the overall rational function represented by the rational neural network is a whopping $3^{\#\textup{layers}}$, while the number of trainable parameters only grows linearly with respect to the depth of the network. Moreover, a superdiagonal type $(3,2)$ allows the rational activation function to behave like a nonconstant linear function at $\pm\infty$, unlike a diagonal type, e.g.,~$(2,2)$, or the ReLU function. A low-degree activation function keeps the number of trainable parameters small, while the implicit composition in a neural network gives us the approximation power of high-degree rationals. This choice is also motivated empirically, and we do not claim that the type $(3,2)$ is the best choice for all situations as the configurations may depend on the application (see \cref{fig_rational_loss_2d} of the Supplementary Material).
Our experiments on the approximation of smooth functions and GANs suggest that rational neural networks are an attractive alternative to ReLU networks (see \cref{sec_experiments}). We observe that a good initialization, motivated by the theory of rational functions, prevents rational neural networks from having arbitrarily large values.

\section{Theoretical results on rational neural networks} \label{sec_th_result}
Here, we demonstrate the theoretical benefit of using neural networks based on rational activation functions due to their superiority over ReLU in approximating functions. We derive optimal bounds in terms of the total number of trainable parameters (also called size) needed by rational networks to approximate ReLU networks as well as functions in the Sobolev space $\mathcal{W}^{n,\infty}([0,1]^d)$. Throughout this paper, we take $\epsilon$ to be a small parameter with $0<\epsilon<1$. We show that an $\epsilon$-approximation on the domain $[-1,1]^d$ of a ReLU network by a rational neural network must have the following size (indicated in brackets):
\begin{equation} \label{eq_rat_approx_relu}
\text{Rational } [\Omega(\log(\log(1/\epsilon)))] \leq \relu \leq \text{Rational } [\mathcal{O}(\log(\log(1/\epsilon)))],
\end{equation}
where the constants only depend on the size and depth of the ReLU network. Here, the upper bound means that all ReLU networks can be approximated to within $\epsilon$ by a rational network of size $\mathcal{O}(\log(\log(1/\epsilon)))$. The lower bound means that there is a ReLU network that cannot be $\epsilon$-approximated by a rational network of size less than $C\log(\log(1/\epsilon))$, for some constant $C>0$. In comparison, the size needed by a ReLU network to approximate a rational neural network within the tolerance of $\epsilon$ is given by the following inequalities:
\begin{equation} \label{eq_relu_approx_rat}
\text{ReLU } [\Omega(\log(1/\epsilon))] \leq \text{Rational} \leq \text{ReLU } [\mathcal{O}(\log(1/\epsilon))^3],
\end{equation}
where the constants only depend on the size and depth of the rational neural network. This means that all rational networks can be approximated to within $\epsilon$ by a ReLU network of size $\mathcal{O}(\log(1/\epsilon))^3$, while there is a rational network that cannot be $\epsilon$-approximated by a ReLU network of size less than $\Omega(\log(1/\epsilon))$. A comparison between~\eqref{eq_rat_approx_relu} and~\eqref{eq_relu_approx_rat} suggests that rational networks could be more resourceful than ReLU. A key difference between rational networks and neural networks with polynomial activation functions is that polynomials perform poorly on non-smooth functions such as ReLU, with an algebraic convergence of $\mathcal{O}(1/\textup{degree})$~\cite{trefethen2019approximation} rather than the (root-)exponential convergence with rationals (see \cref{fig_init_rat}~(left)).

\subsection{Approximation of ReLU networks by rational neural networks} \label{sec_approx_relu_rat}

Telgarsky showed that neural networks and rational functions can approximate each other in the sense that there exists a rational function of degree\footnote{A polylogarithmic function in $x$ is any polynomial in $\log(x)$ and is denoted by $\polylog(x)$.} $\mathcal{O}(\polylog(1/\epsilon))$ that is $\epsilon$-close to a ReLU network~\cite{telgarsky2017neural}, where $\epsilon>0$ is a small number. To prove this statement, Telgarsky used a rational function constructed with Newman polynomials~\cite{newman1964rational} to obtain a rational approximation to the ReLU function that converges with square-root exponential accuracy. That is, Telgarsky needed a rational function of degree $\Omega(\log(1/\epsilon)^2)$ to achieve a tolerance of $\epsilon$. A degree $r$ rational function can be represented with $2(r+1)$ coefficients, i.e., $a_0,\ldots,a_r$ and $b_0,\ldots, b_r$ in~\cref{eq_rational}. Therefore, the rational approximation to a ReLU network constructed by Telgarsky requires at least $\Omega(\polylog(1/\epsilon))$ parameters. In contrast, for any rational function, Telgarsky showed that there exists a ReLU network of size $\mathcal{O}(\polylog(1/\epsilon))$ that is an $\epsilon$-approximation on $[0,1]^d$.

Our key observation is that by composing low-degree rational functions together, we can approximate a ReLU network much more efficiently in terms of the size (rather than the degree) of the rational network. Our theoretical work is based on a family of rationals called Zolotarev sign functions, which are the best rational approximation on $[-1,-\ell]\cup[\ell,1]$, with $0<\ell<1$, to the $\sign$ function~\cite{achieser2013theory,petrushev2011rational}, defined as
\[\sign(x) = 
\begin{cases}
-1, & x<0,\\
0, & x=0,\\
1, & x>0.
\end{cases}\]
A composition of $k\geq 1$ Zolotarev sign functions of type $(3,2)$ has type $(3^k,3^k-1)$ but can be represented with $7k$ parameters instead of $2\times 3^k+1$. This property enables the construction of a rational approximation to ReLU using compositions of low-degree Zolotarev sign functions with $\mathcal{O}(\log(\log(1/\epsilon)))$ parameters in \cref{lem_relu_rat}. 

\begin{lemma} \label{lem_relu_rat}
Let $0<\epsilon <1$. There exists a rational network $R:[-1,1]\to[-1,1]$ of size $\mathcal{O}(\log(\log(1/\epsilon)))$ such that 
\[\|R-\relu\|_{\infty}:=\max_{x\in[-1,1]}|R(x)-\relu(x)| \leq \epsilon.\] 
Moreover, no rational network of size smaller than $\Omega(\log(\log(1/\epsilon)))$ can achieve this.
\end{lemma}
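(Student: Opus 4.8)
The plan is to prove the two halves separately: a near-optimal construction for the upper bound, and a degree-counting argument for the lower bound.

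For the \textbf{upper bound}, the starting point is the identity $\relu(x) = \tfrac12\,x\,(1+\sign(x))$ on $[-1,1]$, which reduces the problem to approximating $\sign$ by a rational network and then multiplying by the rational factor $x/2$. I would fix a gap parameter $\ell = c\epsilon$ and approximate $\sign$ on $[-1,-\ell]\cup[\ell,1]$ by a composition $R_k = z\circ\cdots\circ z$ of $k$ copies of the type-$(3,2)$ Zolotarev sign function, with the component functions tuned so that the range of one stage lands in the validity interval of the next. By the composition property quoted above, $R_k$ has type $(3^k,3^k-1)$ yet is realized by a rational network with only $7k$ trainable parameters; the network output is then $R(x) = \tfrac12 x\,(1 + R_k(x))$.

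The error splits into two regions. For $\ell \le |x| \le 1$ we have $|R(x)-\relu(x)| = \tfrac{|x|}{2}\,|R_k(x)-\sign(x)| \le \tfrac12\,\|R_k - \sign\|_{L^\infty([-1,-\ell]\cup[\ell,1])}$, while for $|x| < \ell$ both $\relu(x)$ and $R(x)$ are $\mathcal{O}(\ell)=\mathcal{O}(\epsilon)$ in magnitude, using that the Zolotarev functions are bounded by $1$ on $[-1,1]$. The quantitative heart is the convergence rate of the composed approximant: to drive $\|R_k-\sign\|$ below $\epsilon$ on a gap of width $\ell = c\epsilon$, the degree $3^k$ must reach $\Theta((\log(1/\epsilon))^2)$, which is exactly the degree required by Newman-type theory and which Zolotarev composition attains optimally. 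Solving $3^k = \Theta((\log(1/\epsilon))^2)$ gives $k = \mathcal{O}(\log\log(1/\epsilon))$, hence size $7k = \mathcal{O}(\log\log(1/\epsilon))$; a final affine rescaling ensures $R$ maps into $[-1,1]$.

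For the \textbf{lower bound}, I would count degrees. A rational network of size $s$ is a composition of finitely many bounded-degree rational activations, so the function it computes is a rational function whose degree is at most exponential in $s$; equivalently, realizing a rational function of degree $D$ requires size $s \ge \Omega(\log D)$. On the other hand, any rational function approximating $\relu$ (equivalently $|x| = 2\relu(x)-x$) to within $\epsilon$ on $[-1,1]$ must, by the classical sharp lower bound on rational approximation of $|x|$, have degree $D \ge \Omega((\log(1/\epsilon))^2)$. Chaining the two inequalities yields $s \ge \Omega(\log((\log(1/\epsilon))^2)) = \Omega(\log\log(1/\epsilon))$. The \textbf{main obstacle} is the quantitative convergence estimate for the composed Zolotarev sign function: one must show that composing $k$ low-degree pieces loses nothing relative to the best degree-$3^k$ rational approximant of $\sign$ (so that the favorable root-exponential rate is inherited), while keeping $\ell$ shrinking no faster than polynomially in $\epsilon$ and the intermediate ranges correctly nested. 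Establishing this composition and optimality statement, together with the uniform boundedness of the composite near the origin, is where the real work lies; the error splitting and the degree-counting lower bound are then routine.
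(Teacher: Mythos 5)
Your proposal follows the paper's proof essentially step for step: the upper bound via $\relu(x)=\tfrac{1}{2}x(1+\sign(x))$ with a composition of $k$ type-$(3,2)$ Zolotarev sign functions of combined degree $3^k=\Theta(\log(1/\epsilon)^2)$ on a gap $\ell\sim\epsilon$ (the paper's construction is literally $\tilde{r}(x)=\tfrac{1}{2}\bigl(xr(x)/(1+\epsilon)+x\bigr)$, with the same error split at $\pm\ell$), and the lower bound by observing that a size-$s$ rational network computes a rational function of degree at most exponential in $s$ and invoking Vyacheslavov's sharp $\Omega(e^{-\pi\sqrt{k}})$ bound for $|x|$ transferred to $\relu$, which is exactly the paper's Corollary and Proposition (the paper merely dresses the size--degree count in a Lagrange-multiplier optimization over layer widths). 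The composition identity you flag as the main obstacle is not proved in the paper either; it is quoted from the Zolotarev literature (Lebedev, Nakatsukasa--Freund), so your outline is complete at the same level of citation.
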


The proof of \cref{lem_relu_rat} (see Supplementary Material) shows that the given bound is optimal in the sense that a rational network requires at least $\Omega(\log(\log(1/\epsilon)))$ parameters to approximate the ReLU function on $[-1,1]$ to within the tolerance $\epsilon>0$. The convergence of the Zolotarev sign functions to the ReLU function is much faster, with respect to the number of parameters, than the rational constructed with Newman polynomials (see~\cref{fig_init_rat} (left)). 

\begin{figure}[htbp]
\centering
\begin{overpic}[width=0.9\textwidth]{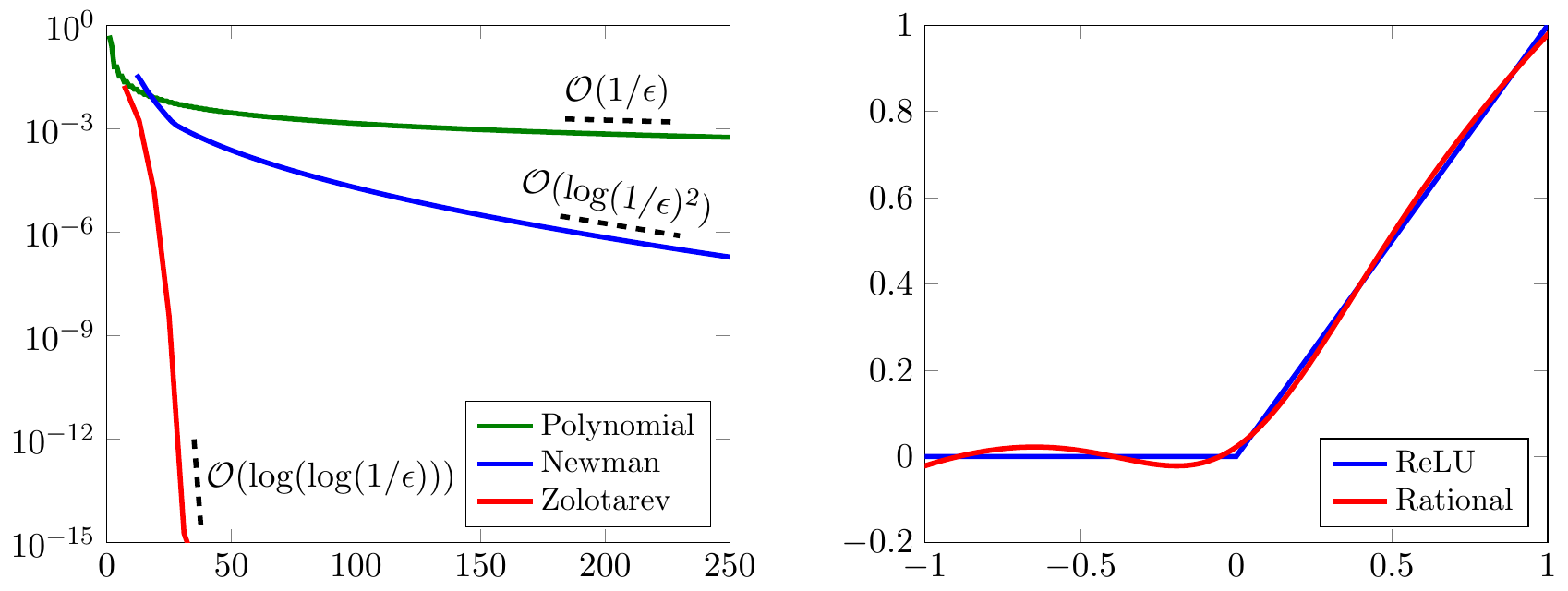}
\put(14,-2.5){Number of parameters}
\put(-3,11){\rotatebox{90}{$\|\relu-r_N\|_{\infty}$}}
\put(78.3,-2.5){x}
\end{overpic}
\vspace{0.3cm}
\caption{Left: Approximation error $\|\relu-r_N\|_{\infty}$ of the Newman (blue), Zolotarev sign functions (red), and best polynomial approximation~\cite{pachon2009barycentric} of degree $N-1$ (green) $r_N$ to ReLU with respect to the number of parameters required to represent $r_N$. Right: Best rational function of type $(3,2)$ (red) that approximates the ReLU function (blue). We use this to initialize the rational activation functions when training a rational neural network.}
\label{fig_init_rat}
\end{figure}

The converse of \cref{lem_relu_rat}, which is a consequence of a theorem proved by Telgarsky~\cite[Theorem~1.1]{telgarsky2017neural}, shows that any rational function can be approximated by a ReLU network of size at most $\mathcal{O}(\log(1/\epsilon)^3)$.

\begin{lemma} \label{lem_relu_approx}
Let $0<\epsilon<1$. If $R:[-1,1]\to[-1,1]$ is a rational function, then there exists a ReLU network $f:[-1,1]\to[-1,1]$ of size $\mathcal{O}(\log(1/\epsilon)^3)$ such that $\|R-f\|_{\infty}\leq \epsilon$.
\end{lemma}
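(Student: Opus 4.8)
The plan is to realize $R=P/Q$ as a composition of a few arithmetic primitives, each of which ReLU networks approximate cheaply, and then to track how the approximation errors and network sizes compound. This is precisely the content of Telgarsky's Theorem~1.1~\cite{telgarsky2017neural}, so the cleanest route is to verify its hypotheses and read off the exponent; below I sketch the construction that underlies it. The first observation is that, since $R$ maps $[-1,1]$ into the bounded set $[-1,1]$, the denominator $Q$ can have no real pole on $[-1,1]$, so there is a constant $\delta>0$ with $|Q(x)|\geq\delta$ for all $x\in[-1,1]$. This lower bound is exactly what makes the reciprocal step well conditioned.

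First I would recall the standard fact that a ReLU network approximates the squaring map $x\mapsto x^2$ on $[-1,1]$ to accuracy $\eta$ with size $\mathcal{O}(\log(1/\eta))$, via the sawtooth self-composition construction. Polarization, $xy=\tfrac14\big((x+y)^2-(x-y)^2\big)$, then yields an approximate multiplication gate of the same asymptotic size. The remaining steps are: (i) using repeated approximate multiplications in a Horner-style scheme, build ReLU networks $\widetilde P$ and $\widetilde Q$ approximating the numerator and denominator polynomials; (ii) build a network approximating the reciprocal $t\mapsto 1/t$ on $\{|t|\geq\delta\}$, which again reduces to approximate products (for instance through the Newton iteration $t_{n+1}=t_n(2-Qt_n)$, whose error satisfies $e_{n+1}=e_n^2$ and hence reaches accuracy $\eta$ with only $\polylog(1/\eta)$ size); and (iii) compose $\widetilde P$ with $\widetilde{(1/Q)}\circ\widetilde Q$ through one more approximate product to obtain $f\approx P\cdot(1/Q)=R$. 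Finally, I would clip the output with the gadget $x\mapsto\relu(x+1)-\relu(x-1)-1$, which equals $x$ on $[-1,1]$ and saturates at $\pm1$ outside, guaranteeing $f:[-1,1]\to[-1,1]$ at negligible extra cost.

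The main obstacle — and the reason the exponent is $3$ rather than $1$ — is error propagation through these nested approximations. Each intermediate approximation must be carried out finely enough to survive amplification by the later stages: dividing by $Q$ can magnify errors by up to $1/\delta$, and the final product and clipping add further fixed factors. To reach target accuracy $\epsilon$, each intermediate accuracy must therefore be taken of order $\epsilon$ times a fixed polynomial in $\delta^{-1}$ and the degree of $R$, and since each such approximation contributes a logarithmic factor to the size, the layers compound to a polylogarithmic total.

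The delicate part is verifying that these logarithmic factors compound to exactly a cubic, and not a higher, power of $\log(1/\epsilon)$, uniformly in $x\in[-1,1]$. Since Telgarsky's Theorem~1.1 already performs this accounting, I would ultimately invoke it directly with the explicit denominator bound $\delta$ in hand, thereby obtaining a ReLU network of size $\mathcal{O}(\log(1/\epsilon)^3)$ with $\|R-f\|_{\infty}\leq\epsilon$, as claimed.
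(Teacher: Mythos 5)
Your proposal is correct and takes essentially the same route as the paper's proof: both verify the hypotheses of Telgarsky's Theorem~1.1~\cite{telgarsky2017neural} (the key point being that $Q$ has no pole on $[-1,1]$, hence a lower bound $\delta>0$, which the paper normalizes to $q(x)\in[2^{-n},1]$ after rescaling the domain to $[0,1]$) and then invoke that theorem to get the $\mathcal{O}(\log(1/\epsilon)^3)$ size. The only cosmetic difference is how the range constraint $f:[-1,1]\to[-1,1]$ is enforced --- the paper scales the output by $1/(1+\epsilon/2)$ while you clip with $\relu(x+1)-\relu(x-1)-1$; both work.
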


To demonstrate the improved approximation power of rational neural networks over ReLU networks ($\mathcal{O}(\log(\log(1/\epsilon)))$ versus $\mathcal{O}(\log(1/\epsilon)^3)$), it is known that a ReLU networks that approximates $x^2$, which is rational, to within $\epsilon$ on $[-1,1]$ must be of size at least $\Omega(\log(1/\epsilon))$~\cite[Theorem~11]{liang2016deep}. 

We can now state our main theorem based on~\cref{lem_relu_rat,lem_relu_approx}. \cref{th_rat_network} provides bounds on the approximation power of ReLU networks by rational neural networks and vice versa. We regard~\cref{th_rat_network} as an analogue of~\cite[Theorem~1.1]{telgarsky2017neural} for our Zolotarev sign functions, where we are counting the number of training parameters instead of the degree of the rational functions. In particular, our rational networks have high degrees but can be represented with few parameters due to compositions, making training more computationally efficient. While Telgarsky required a rational function with $\mathcal{O}(k^M\log(M/\epsilon)^M)$ parameters to approximate a ReLU network with fewer than $k$ nodes in each of $M$ layers to within a tolerance of $\epsilon$, we construct a rational network that only has size $\mathcal{O}(kM\log(\log(M/\epsilon)))$.

\begin{theorem} \label{th_rat_network}
Let $0<\epsilon <1$ and let $\|\cdot\|_1$ denote the vector 1-norm. The following two statements hold: 
\begin{enumerate}[leftmargin=0cm,itemindent=.5cm,noitemsep]
\item Let $R:[-1,1]^d \to [-1,1]$ be a rational network with $M$ layers and at most $k$ nodes per layer, where each node computes $x\mapsto r(a^\top x+b)$ and $r$ is a rational function with Lipschitz constant $L$ ($a$, $b$, and $r$ are possibly distinct across nodes). Suppose further that $\|a\|_1+|b|\leq 1$ and $r:[-1,1]\rightarrow[-1,1]$. Then, there exists a ReLU network $f:[-1,1]^d\to[-1,1]$ of size
\[
\mathcal{O}\left(kM\log(ML^M/\epsilon)^3\right)
\]
such that $\max_{x\in[-1,1]^d} |R(x)-f(x)| \leq \epsilon$.  
\item  Let $f:[-1,1]^d \to [-1,1]$ be a ReLU network with $M$ layers and at most $k$ nodes per layer, where each node computes $x\mapsto\relu(a^\top x+b)$ and the pair $(a,b)$ (possibly distinct across nodes) satisfies $\|a\|_1+|b| \leq 1$. Then, there exists a rational network $R:[-1,1]^d \to [-1,1]$ of size
\[
\mathcal{O}(kM\log(\log(M/\epsilon)))
\]
such that $\max_{x\in[-1,1]^d} |f(x)-R(x)| \leq \epsilon$.\end{enumerate}
\end{theorem}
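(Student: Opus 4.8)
The plan is to prove both statements by the same device: replace each activation node of the given network, one at a time, with the approximating subnetwork supplied by \cref{lem_relu_rat} or \cref{lem_relu_approx}, and then control how the per-node errors accumulate as they propagate through the $M$ layers. The structural fact I would exploit throughout is that the weight constraint $\|a\|_1+|b|\leq 1$ makes every affine pre-activation map non-expansive on $[-1,1]$-valued inputs: if $u\in[-1,1]^k$ then $|a^\top u+b|\leq\|a\|_1\|u\|_\infty+|b|\leq 1$, so pre-activations stay in $[-1,1]$ and the per-node approximation guarantees (which hold on $[-1,1]$) remain applicable layer after layer. Writing $\eta_\ell$ for the $\infty$-norm discrepancy between the true and the substituted network after layer $\ell$, with $\eta_0=0$, a telescoping add-and-subtract step at each node yields a one-step recursion, and solving it gives the total error $\eta_M$.

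For the second statement (ReLU approximated by rational), I would take $\rho$ to be the size-$\mathcal{O}(\log(\log(1/\delta)))$ rational approximation of $\relu$ from \cref{lem_relu_rat} and insert a copy at each of the at most $kM$ nodes. Comparing the true map $u\mapsto\relu(a^\top u+b)$ with the substituted map $u\mapsto\rho(a^\top u+b)$, I add and subtract $\relu$ evaluated at the already-perturbed input; since both $\relu$ and the affine map are $1$-Lipschitz, the propagated term contributes $\eta_{\ell-1}$ and the fresh approximation term contributes $\delta$, giving $\eta_\ell\leq\eta_{\ell-1}+\delta$ and hence $\eta_M\leq M\delta$. Choosing $\delta=\epsilon/M$ forces $\eta_M\leq\epsilon$; each node then costs $\mathcal{O}(\log(\log(M/\epsilon)))$ parameters, and the total size is $\mathcal{O}(kM\log(\log(M/\epsilon)))$, as claimed.

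For the first statement (rational approximated by ReLU), the argument is symmetric but the Lipschitz bookkeeping is the crux. I would replace each rational activation $r$ (which maps $[-1,1]\to[-1,1]$ and is $L$-Lipschitz) by the ReLU subnetwork of size $\mathcal{O}(\log(1/\delta)^3)$ from \cref{lem_relu_approx}. Now the add-and-subtract step compares $r$ at the true input with $r$ at the perturbed input, so the propagated error is amplified by the activation's Lipschitz constant, yielding $\eta_\ell\leq L\,\eta_{\ell-1}+\delta$. Unrolling this gives $\eta_M\leq\delta\sum_{i=0}^{M-1}L^i\leq M L^{M}\delta$ in the relevant regime $L\geq 1$, so I would take $\delta=\epsilon/(ML^M)$; each node then costs $\mathcal{O}(\log(ML^M/\epsilon)^3)$ parameters and the total size is $\mathcal{O}(kM\log(ML^M/\epsilon)^3)$.

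I expect the main obstacle to be the error-propagation analysis in the first statement, specifically the geometric $L^M$ blow-up: it is what produces the $\log(ML^M/\epsilon)$ argument and must be tracked honestly through all $M$ layers, and it is also why the normalization $\|a\|_1+|b|\leq 1$ is indispensable, since without it the affine maps could amplify errors further and the recursion would not close. A secondary technical point I would need to dispatch is that the substituted post-activations can stray slightly outside $[-1,1]$; I would absorb this either by applying the per-node lemmas on a marginally enlarged interval or by appending an $\mathcal{O}(1)$-size ReLU clipping gadget, neither of which changes the stated asymptotic sizes.
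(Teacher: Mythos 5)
Your proposal is correct and follows essentially the same route as the paper's proof: node-by-node substitution using \cref{lem_relu_approx} and \cref{lem_relu_rat}, the same error-propagation recursion $\eta_\ell \leq L\,\eta_{\ell-1}+\delta$ enabled by the normalization $\|a\|_1+|b|\leq 1$, with your uniform tolerance $\delta=\epsilon/(ML^M)$ in place of the paper's layer-dependent schedule $\epsilon_j=\epsilon L^{j-M}/M$ to the same asymptotic effect (and in part 2 you directly reprove the accumulation bound that the paper cites from Telgarsky's Lemma~1.3). Your closing worry about post-activations straying outside $[-1,1]$ is already dispatched by the lemmas as stated, since both supply approximants mapping $[-1,1]$ into $[-1,1]$, so no enlarged interval or clipping gadget is needed.
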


\Cref{th_rat_network} highlights the improved approximation power of rational neural networks over ReLU networks. ReLU networks of size $\mathcal{O}(\polylog(1/\epsilon))$ are required to approximate rational networks while rational networks of size only $\mathcal{O}(\log(\log(1/\epsilon)))$ are sufficient to approximate ReLU networks.

\subsection{Approximation of functions by rational networks} \label{sec_func_rat}

A popular question is the required size and depth of deep neural networks to approximate smooth functions~\cite{liang2016deep,montanelli2019deep,yarotsky2017error}. In this section, we consider the approximation theory of rational networks. In particular, we consider the approximation of functions in the Sobolev space $\mathcal{W}^{n,\infty}([0,1]^d)$, where $n\geq 1$ is the regularity of the functions and $d\geq 1$. The norm of a function $f\in\mathcal{W}^{n,\infty}([0,1]^d)$ is defined as
\[
\|f\|_{\mathcal{W}^{n,\infty}([0,1]^d)}=\max_{|\mathbf{n}|\leq n} \esssup_{\mathbf{x}\in[0,1]^d}|D^{\mathbf{n}}f(\mathbf{x})|,
\]
where $\mathbf{n}$ is the multi-index $\mathbf{n}=(n_1,\ldots,n_d)\in\{0,\ldots,n\}^d$, and $D^{\mathbf{n}}f$ is the corresponding weak derivative of $f$. In this section, we consider the approximation of functions from
\[
F_{d,n} := \{f\in \mathcal{W}^{n,\infty}([0,1]^d),\quad \|f\|_{\mathcal{W}^{n,\infty}([0,1]^d)}\leq 1\}.
\]
By the Sobolev embedding theorem~\cite{brezis2010functional}, this space contains the functions in $\mathcal{C}^{n-1}([0,1]^d)$, which is the class of functions whose first $n-1$ derivatives are Lipschitz continuous. 
Yarotsky derived upper bounds on the size of neural networks with piecewise linear activation functions needed to approximate functions in $F_{d,n}$~\cite[Theorem~1]{yarotsky2017error}. In particular, he constructed an $\epsilon$-approximation to functions in $F_{d,n}$ with a ReLU network of size at most $\mathcal{O}(\epsilon^{-d/n}\log(1/\epsilon))$ and depth smaller than $\mathcal{O}(\log(1/\epsilon))$. The term $\epsilon^{-d/n}$ is introduced by a local Taylor approximation, while the $\log(1/\epsilon)$ term is the size of the ReLU network needed to approximate monomials, i.e., $x^j$ for $j\geq 0$, in the Taylor series expansion.

We now present an analogue of Yarotsky's theorem for a rational neural network.

\begin{theorem} \label{th_approx_smooth_rat}
Let $d\geq 1$, $n\geq 1$, $0<\epsilon<1$, and $f\in F_{d,n}$. There exists a rational neural network $R$ of size
\[\mathcal{O}(\epsilon^{-d/n}\log(\log(1/\epsilon)))\]
and maximum depth $\mathcal{O}(\log(\log(1/\epsilon)))$ such that $\|f-R\|_{\infty}\leq \epsilon$.
\end{theorem}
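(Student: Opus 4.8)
The plan is to follow Yarotsky's construction for ReLU networks~\cite{yarotsky2017error} and replace its two nonlinear primitives, namely local multiplication and piecewise-linear bump functions, by their rational counterparts, exploiting the fact that a rational network can square \emph{exactly} and can mimic ReLU with only $\mathcal{O}(\log(\log(1/\epsilon)))$ parameters (\cref{lem_relu_rat}). Recall that Yarotsky partitions $[0,1]^d$ into $N=\mathcal{O}(\epsilon^{-d/n})$ overlapping patches, attaches to each a piecewise-linear bump $\phi_{\mathbf{m}}$ forming a partition of unity, and on each patch replaces $f$ by its degree-$(n-1)$ Taylor polynomial $P_{\mathbf{m}}$, so that $f\approx\sum_{\mathbf{m}}\phi_{\mathbf{m}}P_{\mathbf{m}}$ to within $\mathcal{O}(\epsilon)$ once the patch width is $\mathcal{O}(\epsilon^{1/n})$. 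This local Taylor step is responsible for the unavoidable $\epsilon^{-d/n}$ factor and is kept verbatim; the size economy comes entirely from implementing the arithmetic more cheaply with rationals.

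First I would build the arithmetic gadgets. Since $x\mapsto x^2$ is a rational function of type $(2,0)$, a single rational node of type $(3,2)$ computes it exactly, and polarization $xy=\tfrac14\big((x+y)^2-(x-y)^2\big)$ then yields an exact product with $\mathcal{O}(1)$ nodes and $\mathcal{O}(1)$ depth (after a harmless rescaling to keep arguments in $[-1,1]$). Consequently the monomials $x^j$ with $j<n$ appearing in each $P_{\mathbf{m}}$, as well as every product used to combine factors, are formed by $\mathcal{O}(1)$ rational nodes of depth $\mathcal{O}(\log n)=\mathcal{O}(1)$, with no error and, crucially, with no $\log(1/\epsilon)$ penalty. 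This is precisely where the ReLU construction pays $\mathcal{O}(\log(1/\epsilon))$ per multiplication and where rationals win.

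Next I would realize the bumps. Each $\phi_{\mathbf{m}}$ is a product of $d$ univariate hat functions, and each hat is built from a fixed number of ReLU units independent of $\epsilon$; the $d$-fold product, and the final products $\phi_{\mathbf{m}}P_{\mathbf{m}}$, are formed exactly by the rational arithmetic just described. Replacing every ReLU by the rational network of \cref{lem_relu_rat} costs $\mathcal{O}(\log(\log(1/\delta)))$ parameters and depth to achieve accuracy $\delta$ per unit, while keeping all outputs in $[-1,1]$ so that the compositions remain well defined. Because the bumps are uniformly Lipschitz and there are $N=\mathcal{O}(\epsilon^{-d/n})$ of them, a propagated-error estimate shows it suffices to take $\delta$ polynomially small in $\epsilon$; then $\log(\log(1/\delta))=\mathcal{O}(\log(\log(1/\epsilon)))$, so each bump uses $\mathcal{O}(\log(\log(1/\epsilon)))$ parameters. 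Summing over patches gives total size $\mathcal{O}(\epsilon^{-d/n}\log(\log(1/\epsilon)))$ and maximum depth $\mathcal{O}(\log(\log(1/\epsilon)))$, the latter set by the rational-ReLU gadgets since all arithmetic has $\mathcal{O}(1)$ depth.

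The main obstacle is the error bookkeeping in the last step: one must verify that swapping the many ReLU units for bounded rational approximations does not let errors accumulate faster than the Lipschitz constants allow, and that every intermediate value stays in $[-1,1]$ so that both the exact squaring and \cref{lem_relu_rat} apply at each node. The delicate point is choosing $\delta$ small enough to beat the $\mathcal{O}(\epsilon^{-d/n})$ amplification yet large enough that $\log(\log(1/\delta))$ collapses to $\mathcal{O}(\log(\log(1/\epsilon)))$; this is exactly what converts Yarotsky's $\log(1/\epsilon)$ into the double logarithm and must be tracked carefully through the composition.
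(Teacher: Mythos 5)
Your proposal is correct and follows essentially the same route as the paper's proof: Yarotsky's partition-of-unity/local-Taylor scheme kept verbatim, monomials and products formed exactly via rational squaring and polarization (the paper's \cref{prop_rat_x_n}), hat functions approximated by replacing their ReLU units with the network of \cref{lem_relu_rat} (packaged in the paper as \cref{th_approx_piecewise_rat}), and a Lipschitz-type error propagation with tolerance $\delta$ polynomial in $\epsilon$ so that $\log(\log(1/\delta))=\mathcal{O}(\log(\log(1/\epsilon)))$. The only cosmetic difference is bookkeeping: the paper telescopes the $d$-fold product of hats to get a per-term error $d\delta$ and chooses $\delta=\epsilon/(2^{d+1}d^{n+1})$ explicitly, which is exactly the estimate you defer to at the end.
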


The proof of \cref{th_approx_smooth_rat} consists of approximating $f$ by a local Taylor expansion. One needs to approximate the piecewise linear functions and monomials arising in the Taylor expansion by rational networks using \cref{lem_relu_rat} and \cref{prop_rat_x_n} (see Supplementary Material). The main distinction between Yarotsky's argument and the proof of~\cref{th_approx_smooth_rat} is that monomials can be represented by rational neural networks with a size that does not depend on the accuracy of $\epsilon$. In contrast, ReLU networks require $\mathcal{O}(\log(1/\epsilon))$ parameters. Meanwhile, while ReLU neural networks can exactly approximate piecewise linear functions with a constant number of parameters, rational networks can approximate them with a size of a most $\mathcal{O}(\log(\log(1/\epsilon)))$ (see~\cref{lem_relu_rat}). That is, rational neural networks approximate piecewise linear functions much faster than ReLU networks approximate polynomials. This allows the existence of a rational network approximation to $f$ with exponentially smaller depth ($\mathcal{O}(\log(\log(1/\epsilon)))$) than the ReLU networks constructed by Yarotsky.

A theorem proved by DeVore et al.~\cite{devore1989optimal} gives a lower bound of $\Omega(\epsilon^{-d/n})$ on the number of parameters needed by a neural network to express any function in $F_{d,n}$ with an error $\epsilon$, under the assumption that the weights are chosen continuously. Comparing $\mathcal{O}(\epsilon^{-d/n}\log(\log(1/\epsilon)))$ and $\mathcal{O}(\epsilon^{-d/n}\log(1/\epsilon))$, we find that rational neural networks require exponentially fewer nodes than ReLU networks with respect to the optimal bound of $\Omega(\epsilon^{-d/n})$ to approximate functions in $F_{d,n}$.

\section{Experiments using rational neural networks} \label{sec_experiments}

In this section, we consider neural networks with trainable rational activation functions of type $(3,2)$. We select the type $(3,2)$ based on empirical performance; roughly, a low-degree (but higher than $1$) rational function is ideal for generating high-degree rational functions by composition, with a small number of parameters. The rational activation units can be easily implemented in the open-source TensorFlow library~\cite{abadi2016tensorflow} by using the \texttt{polyval} and \texttt{divide} commands for function evaluations. The coefficients of the numerators and denominators of the rational activation functions are trainable parameters, determined at the same time as the weights and biases of the neural network by backpropagation and a gradient descent optimization algorithm.

One crucial question is the initialization of the coefficients of the rational activation functions~\cite{chen2018rational,molina2019pad}. A badly initialized rational function might contain poles on the real axis, leading to exploding values, or converge to a local minimum in the optimization process. Our experiments, supported by the empirical results of Molina et al.~\cite{molina2019pad}, show that initializing each rational function with the best rational approximation to the ReLU function (as described in \cref{lem_relu_rat}) produces good performance. The underlying idea is to initialize rational networks near a network with ReLU activation functions, widely used for deep learning. Then, the adaptivity of the rational functions allows for further improvements during the training phase. We represent the initial rational function used in our experiments in \cref{fig_init_rat} (right). The coefficients of this function are obtained by using the \texttt{minimax} command, available in the Chebfun software~\cite{driscoll2014chebfun,filip2018rational} for numerically computing rational approximations (see \cref{table_coeffs} in the Supplementary Material).

In the following experiments, we use a single rational activation function of type $(3,2)$ at each layer, instead of different functions at each node to reduce the number of trainable parameters and the computational training expense.

\subsection{Approximation of functions} \label{sec_approx_fun}

Raissi, Perdikaris, and Karniadakis~\cite{raissi2018deep,raissi2019physics} introduce a framework called \emph{deep hidden physics models} for discovering nonlinear partial differential equations (PDEs) from observations. This technique requires to solving the following interpolation problem: given the observation data $(u_i)_{1\leq i\leq N}$ at the spatio-temporal points $(x_i,t_i)_{1\leq i\leq N}$, find a neural network $\mathcal{N}$ (called the identification network), that minimizes the loss function
\begin{equation} \label{eq_loss_approx}
\mathcal{L}=\frac{1}{N}\sum_{i=1}^N|\mathcal{N}(x_i,t_i)-u_i|^2.
\end{equation}
This technique has successfully discovered hidden models in fluid mechanics~\cite{raissi2020hidden}, solid mechanics~\cite{haghighat2020deep}, and nonlinear partial differential equations such as the Korteweg--de Vries (KdV) equation~\cite{raissi2019physics}. Raissi et al. use an identification network, consisting of $4$ layers and $50$ nodes per layer, to interpolate samples from a solution to the KdV equation. Moreover, they observe that networks based on smooth activation functions, such as the hyperbolic tangent ($\tanh(x)$) or the sinusoid ($\sin(x)$), outperform ReLU neural networks~\cite{raissi2018deep,raissi2019physics}. However, the performance of these smooth activation functions highly depends on the application.

Moreover, these functions might not be adapted to approximate non-smooth or highly oscillatory solutions. Recently, Jagtap, Kawaguchi, and Karnidakis~\cite{jagtap2020adaptive} proposed and analyzed different adaptive activation functions to approximate smooth and discontinuous functions with physics-informed neural networks. More specifically, they use an adaptive version of classical activation functions such as sigmoid, hyperbolic tangent, ReLU, and Leaky ReLU. The choice of these trainable activation functions introduces another parameter in the design of the neural network architecture, which may not be ideal for use for a black-box data-driven PDE solver. 

\begin{figure}[htbp]
\centering
\begin{overpic}[width=0.45\textwidth]{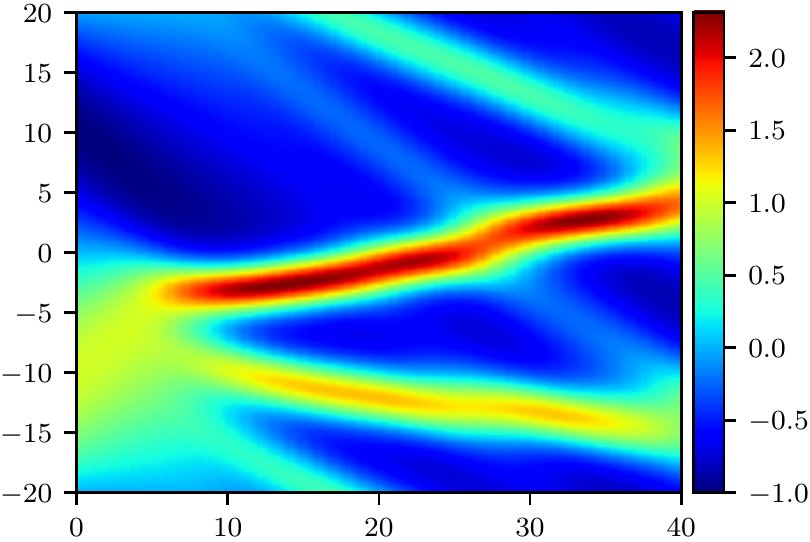}
\put(46,-4){t}
\put(-1,34.3){x}
\end{overpic}
\hspace{0.7cm}
\begin{overpic}[width=0.44\textwidth, trim=30 15 45 25,clip]{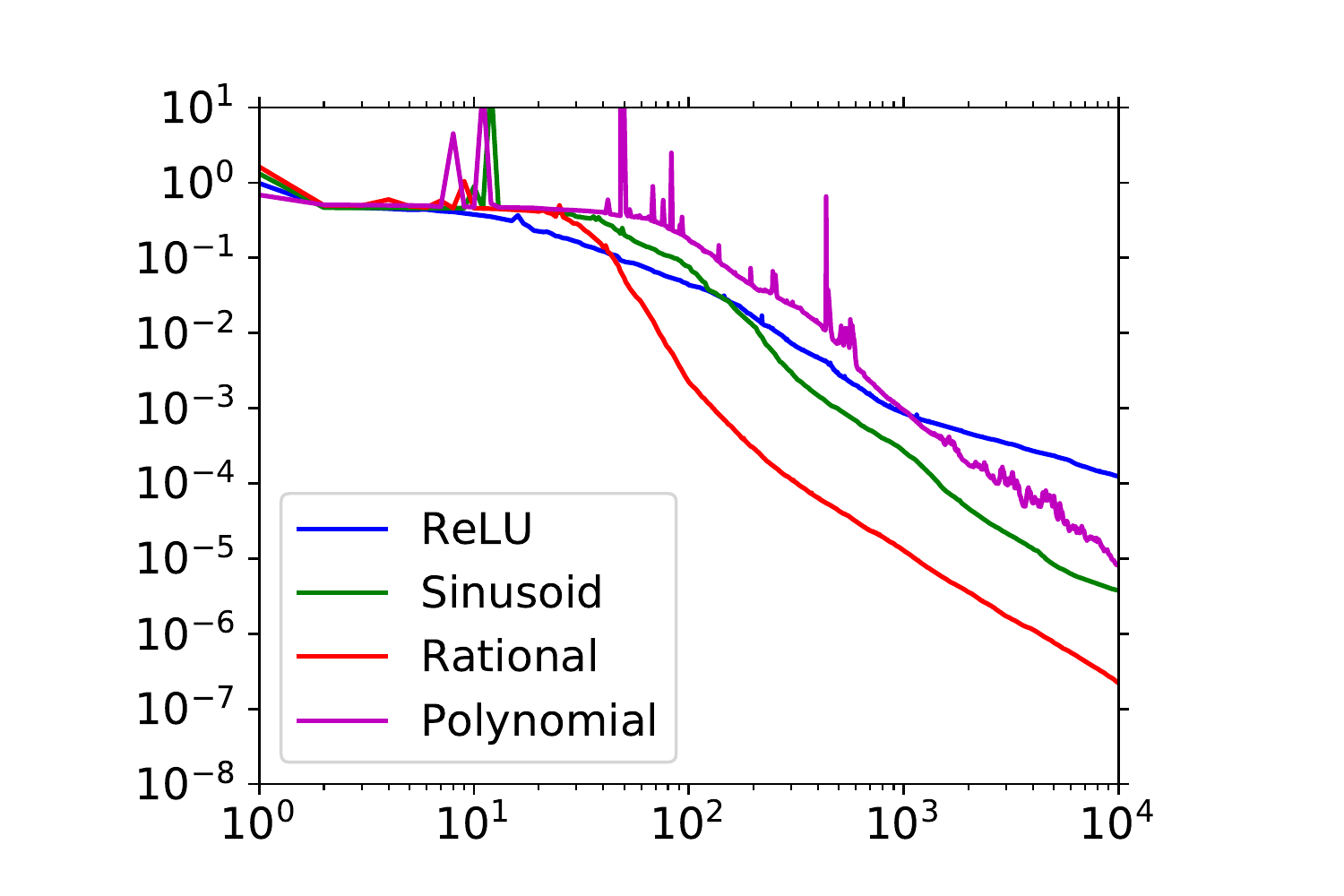}
\put(45,-5){Epochs}
\put(-7,17){\rotatebox{90}{Validation loss}}
\end{overpic}
\vspace{0.2cm}
\caption{Solution to the KdV equation used as training data (left) and validation loss of a ReLU (blue), sinusoid (green), rational (red), and polynomial (purple) neural networks with respect to the number of optimization steps (right).}
\label{fig_loss_2d}
\end{figure}

We illustrate that rational neural networks can address the issues mentioned above due to their adaptivity and approximation power (see \cref{sec_th_result}). Similarly to Raissi~\cite{raissi2018deep}, we use a solution $u$ to the KdV equation:
\[
u_t=-uu_x-u_{xxx}, \quad u(x,0) = -\sin(\pi x/20),
\]
as training data for the identification network (see the left panel of \cref{fig_loss_2d}). We train and compare four neural networks, which contain ReLU, sinusoid, rational, and polynomial activation functions, respectively.\footnote{Details of the parameters used for this experiment are available in the Supplementary Material.} The mean squared error (MSE) of the neural networks on the validation set throughout the training phase is reported in the right panel of \cref{fig_loss_2d}. We observe that the rational neural network outperforms the sinusoid network, despite having the same asymptotic convergence rate. The network with polynomial activation functions (chosen to be of degree 3 in this example) is harder to train than the rational network, as shown by the non-smooth validation loss (see the right panel of \cref{fig_loss_2d}). We highlight that rational neural networks are never much bigger in terms of trainable parameters than ReLU networks since the increase is only linear with respect to the number of layers. Here, the ReLU network has $8000$ parameters (consisting of weights and biases), while the rational network has $8000+7\times\#\textup{layers}=8035$. The  ReLU, sinusoid, rational, and polynomial networks achieve the following mean square errors after $10^4$ epochs:
\begin{align*}
&\text{MSE}(u_{\text{ReLU}}) = 1.9\times 10^{-4}, &\text{MSE}(u_{\sin}) = 3.3\times 10^{-6},\\ 
&\text{MSE}(u_{\text{rat}}) = 1.2\times 10^{-7},  & \text{MSE}(u_{\text{poly}}) = 3.6\times 10^{-5}.
\end{align*}
The absolute approximation errors between the different neural networks and the exact solution to the KdV equation is illustrated in \cref{fig_2d_approx} of the Supplementary Material. The rational neural network is approximatively five times more accurate than the sinusoid network used by Raissi and twenty times more accurate than the ReLU network. Moreover, the approximation errors made by the ReLU network are not uniformly distributed in space and time and located in specific regions, indicating that a network with non-smooth activation functions is not appropriate to resolve smooth solutions to PDEs.

\subsection{Generative adversarial networks}

Generative adversarial networks (GANs) are used to generate fake examples from an existing dataset~\cite{goodfellow2014generative}. They usually consist of two networks: a generator to produce fake samples and a discriminator to evaluate the samples of the generator with the training dataset. Radford et al.~\cite{radford2015unsupervised} describe deep convolutional generative adversarial networks (DCGANs) to build good image representations using convolutional architectures. They evaluate their model on the MNIST and Imagenet image datasets~\cite{deng2009imagenet,lecun1998gradient}. This section highlights the simplicity of using rational activation functions in existing neural network architectures by training an Auxiliary Classifier GAN (ACGAN)~\cite{odena2017conditional} on the MNIST dataset. In particular, the neural network\footnote{We use the TensorFlow implementation available at~\cite{KerasDoc} and provide extended details and results of the experiment in the Supplementary Material.}, denoted by ReLU network in this section, consists of convolutional generator and discriminator networks with ReLU and Leaky ReLU~\cite{maas2013rectifier} activation units (respectively) and is used as a reference GAN. As in the experiment described in \cref{sec_approx_fun}, we replace the activation units of the generative and discriminator networks by a rational function with trainable coefficients (see~\cref{fig_init_rat}). We initialize the activation functions in the training phase with the best rational function that approximates the ReLU function on $[-1,1]$.

\begin{figure}[htbp]
\centering
\begin{minipage}{0.8\textwidth}
\centering
\begin{overpic}[width=0.225\textwidth,trim={0 952px 140px 0},clip]{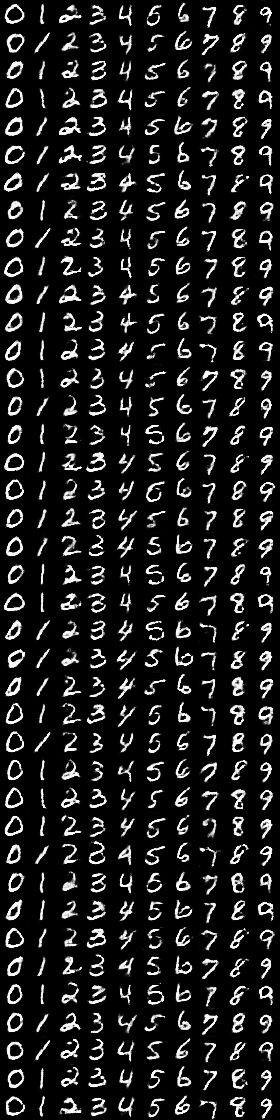}
\put(25,-11){epoch 5}
\put(-15,35){\rotatebox{90}{ReLU}}
\end{overpic}
\begin{overpic}[width=0.225\textwidth,trim={0 952px 140px 0},clip]{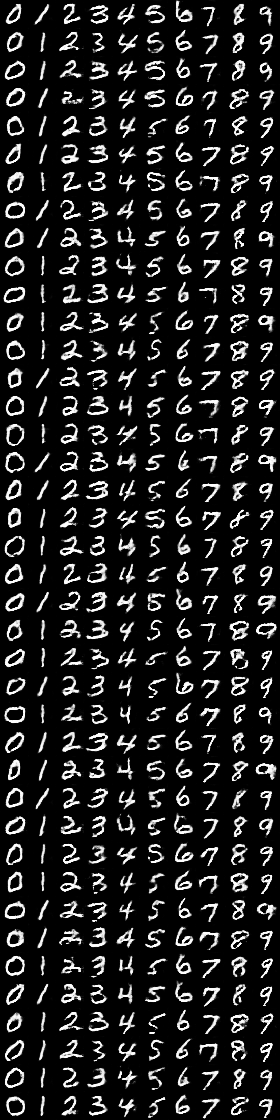}
\put(20,-11){epoch 10}
\end{overpic}
\begin{overpic}[width=0.225\textwidth,trim={0 952px 140px 0},clip]{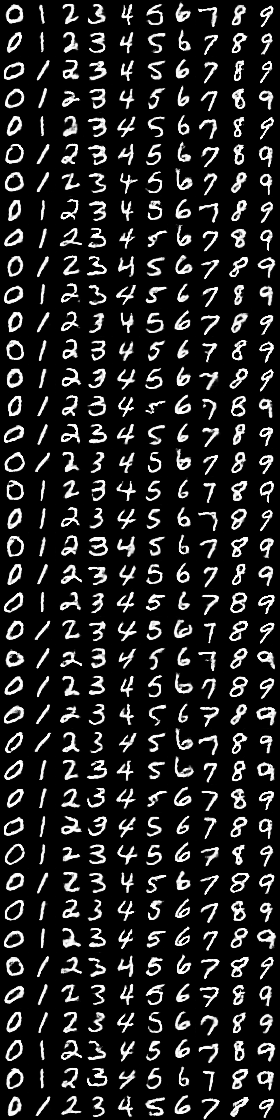}
\put(20,-11){epoch 15}
\end{overpic}
\begin{overpic}[width=0.225\textwidth,trim={0 952px 140px 0},clip]{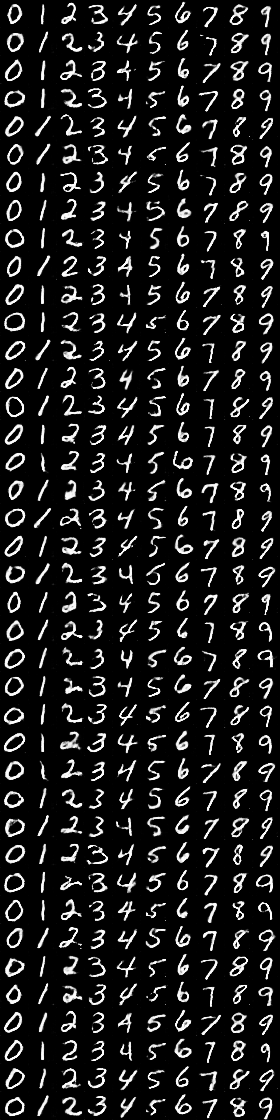}
\put(20,-11){epoch 20}
\end{overpic}\\
\vspace{13px}
\begin{overpic}[width=0.225\textwidth,trim={0 952px 140px 0},clip]{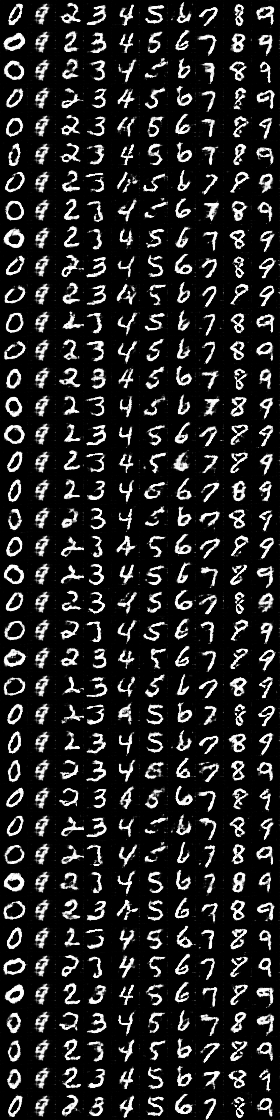}
\put(25,-11){epoch 5}
\put(-15,30){\rotatebox{90}{Rational}}
\end{overpic}
\begin{overpic}[width=0.225\textwidth,trim={0 952px 140px 0},clip]{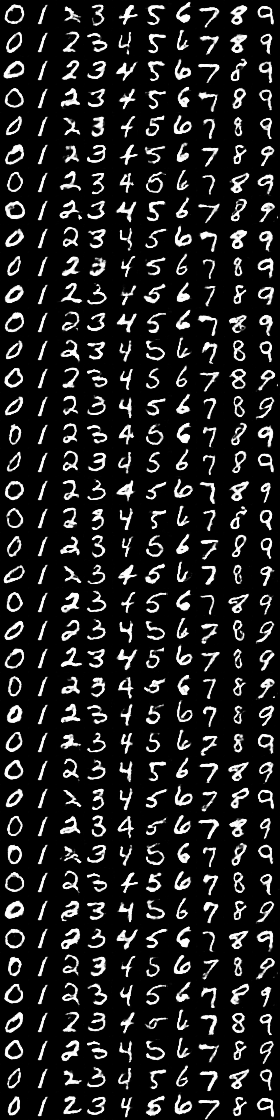}
\put(20,-11){epoch 10}
\end{overpic}
\begin{overpic}[width=0.225\textwidth,trim={0 952px 140px 0},clip]{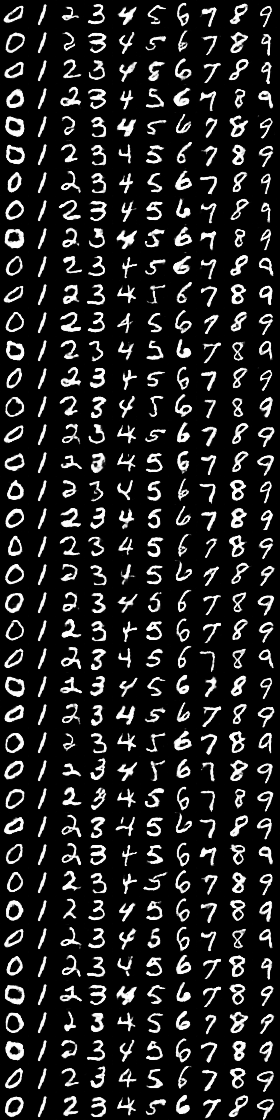}
\put(20,-11){epoch 15}
\end{overpic}
\begin{overpic}[width=0.225\textwidth,trim={0 952px 140px 0},clip]{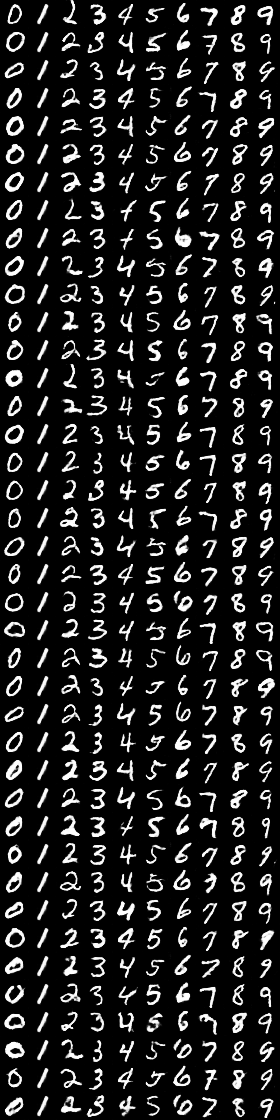}
\put(20,-11){epoch 20}
\end{overpic}
\end{minipage}%
\begin{minipage}{0.18\textwidth}
\centering
\begin{overpic}[width=\textwidth,trim={0 756px 140px 0},clip]{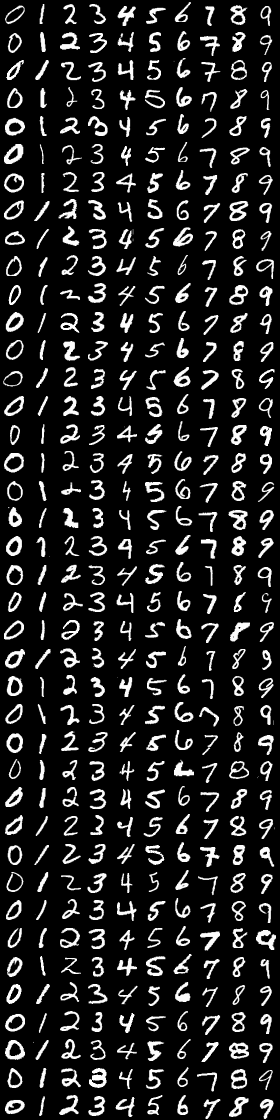}
\put(3,-5){MNIST images}
\end{overpic}
\end{minipage}
\vspace{0.5cm}
\caption{Digits generated by a ReLU (top) and rational (bottom) auxiliary classifier generative adversarial network. The right panel contains samples from the first five classes of the MNIST dataset for comparison.}
\label{fig_MNIST_gan}
\end{figure}

We show images of digits from the first five classes generated by a ReLU and rational GANs at different epochs of the training in \cref{fig_MNIST_gan} (the samples are generated randomly and are not manually selected). We observe that a rational network can generate realistic images with a broader range of features than the ReLU network, as illustrated by the presence of bold numbers at the epoch 20 in the bottom panel of \cref{fig_MNIST_gan}. However, the digits one generated by the rational network are identical, suggesting that the rational GAN suffers from mode collapse. It should be noted that generative adversarial networks are notoriously tricky to train~\cite{goodfellow2016deep}. The hyper-parameters of the reference model are intensively tuned for a piecewise linear activation function (as shown by the use of Leaky ReLU in the discriminator network). Moreover, many stabilization methods have been proposed to resolve the mode collapse and non-convergence issues in training, such as Wasserstein GAN~\cite{arjovsky2017wasserstein}, Unrolled Generative Adversarial Networks~\cite{metz2016unrolled}, and batch normalization~\cite{ioffe2015batch}. These techniques could be explored and combined with rational networks to address the mode collapse issue observed in this experiment.

\section{Conclusions}
We have investigated rational neural networks, which are neural networks with smooth trainable activation functions based on rational functions. Theoretical statements demonstrate the improved approximation power of rational networks in comparison with ReLU networks. In practice, it seems beneficial to select the activation function as very low-degree rationals, making training more computationally efficient. We emphasize that it is simple to implement rational networks in existing deep learning architectures, such as TensorFlow, together with the ability to have trainable activation functions.

There are many future research directions exploring the potential applications of rational networks in fields such as image classification, time series forecasting, and generative adversarial networks. These applications already employ nonstandard activation functions to overcome various drawbacks of ReLU. Another exciting and promising field is the numerical solution and data-driven discovery of partial differential equations with deep learning. We believe that popular techniques such as physics-informed neural networks~\cite{raissi2019physics} could benefit from rational neural networks to improve the robustness and performances of PDE solvers, both from a theoretical and practical viewpoint. 

\section*{Broader Impact}
Neural networks have applications in diverse fields such as facial recognition, credit-card fraud, speech recognition, and medical diagnosis. There is a growing understanding of the approximation power of neural networks, which is adding theoretical justification to their use in societal applications. We are particularly interested in the future applicability of rational neural networks in discovering and solving of partial differential equations (PDEs). Neural networks, in particular rational neural networks, have the potential to revolutionize fields where PDE models derived by mechanistic principles are lacking. 

\begin{ack}
The authors thank the National Institute of Informatics (Japan) for funding a research visit, during which this project was initiated. We thank Gilbert Strang for making us aware of Telgarsky's paper~\cite{telgarsky2017neural}. We also thank Matthew Colbrook and Nick Trefethen for their suggestions on the paper. This work is supported by the EPSRC Centre For Doctoral Training in Industrially Focused Mathematical Modelling (EP/L015803/1) in collaboration with Simula Research Laboratory. The work of the third author is supported by the National Science Foundation grant no. 1818757.
\end{ack}

\small

\bibliographystyle{plain}
\bibliography{biblio}

\begin{thebibliography}{10}

\bibitem{KerasDoc}
{Keras GitHub repository, \texttt{mnist\_acgan} example}.
\newblock \url{https://github.com/keras-team/keras}, 2019.

\bibitem{abadi2016tensorflow}
Mart{\'\i}n Abadi, Paul Barham, Jianmin Chen, Zhifeng Chen, Andy Davis, Jeffrey
  Dean, Matthieu Devin, Sanjay Ghemawat, Geoffrey Irving, Michael Isard, et~al.
\newblock {TensorFlow: A System for Large-Scale Machine Learning}.
\newblock In {\em 12th {USENIX} Conference on Operating Systems Design and
  Implementation}, pages 265--283, 2016.

\bibitem{achieser2013theory}
Naum~I. Achieser.
\newblock {\em {Theory of Approximation}}.
\newblock Courier Corporation, 2013.

\bibitem{arjovsky2017wasserstein}
Martin Arjovsky, Soumith Chintala, and L{\'e}on Bottou.
\newblock {Wasserstein Generative Adversarial Networks}.
\newblock In {\em Proceedings of the 34th International Conference on Machine
  Learning (ICML)}, pages 214--223, 2017.

\bibitem{beckermann2017singular}
Bernhard Beckermann and Alex Townsend.
\newblock {On the Singular Values of Matrices with Displacement Structure}.
\newblock {\em SIAM J. Matrix Anal. A.}, 38(4):1227--1248, 2017.

\bibitem{bengio1994learning}
Yoshua Bengio, Patrice Simard, and Paolo Frasconi.
\newblock {Learning Long-Term Dependencies with Gradient Descent is Difficult}.
\newblock {\em IEEE T. Neural Netw.}, 5(2):157--166, 1994.

\bibitem{boulleGit}
Nicolas Boull\'e, Yuji Nakatsukasa, and Alex Townsend.
\newblock {GitHub repository}.
\newblock \url{https://github.com/NBoulle/RationalNets/}, 2020.

\bibitem{brezis2010functional}
Haim Brezis.
\newblock {\em {Functional Analysis, Sobolev Spaces and Partial Differential
  Equations}}.
\newblock Springer Science \& Business Media, 2010.

\bibitem{chen2018rational}
Zhiqian Chen, Feng Chen, Rongjie Lai, Xuchao Zhang, and Chang-Tien Lu.
\newblock {Rational Neural Networks for Approximating Graph Convolution
  Operator on Jump Discontinuities}.
\newblock In {\em IEEE International Conference on Data Mining (ICDM)}, pages
  59--68, 2018.

\bibitem{cheng2018polynomial}
Xi~Cheng, Bohdan Khomtchouk, Norman Matloff, and Pete Mohanty.
\newblock {Polynomial Regression As an Alternative to Neural Nets}.
\newblock {\em arXiv preprint arXiv:1806.06850}, 2018.

\bibitem{clevert2015fast}
Djork-Arn{\'e} Clevert, Thomas Unterthiner, and Sepp Hochreiter.
\newblock {Fast and Accurate Deep Network Learning by Exponential Linear Units
  (ELUs)}.
\newblock {\em arXiv preprint arXiv:1511.07289}, 2015.

\bibitem{daws2019polynomial}
Joseph Daws~Jr. and Clayton~G. Webster.
\newblock {A Polynomial-Based Approach for Architectural Design and Learning
  with Deep Neural Networks}.
\newblock {\em arXiv preprint arXiv:1905.10457}, 2019.

\bibitem{deng2009imagenet}
Jia Deng, Wei Dong, Richard Socher, Li-Jia Li, Kai Li, and Li~Fei-Fei.
\newblock {Imagenet: A large-scale hierarchical image database}.
\newblock In {\em IEEE Conference on Computer Vision and Pattern Recognition
  (CVPR)}, pages 248--255, 2009.

\bibitem{devore1989optimal}
Ronald~A. DeVore, Ralph Howard, and Charles Micchelli.
\newblock Optimal nonlinear approximation.
\newblock {\em Manuscripta Math.}, 63(4):469--478, 1989.

\bibitem{driscoll2014chebfun}
Tobin~A. Driscoll, Nicholas Hale, and Lloyd~N. Trefethen.
\newblock {Chebfun Guide}, 2014.

\bibitem{filip2018rational}
Silviu-Ioan Filip, Yuji Nakatsukasa, Lloyd~N. Trefethen, and Bernhard
  Beckermann.
\newblock {Rational Minimax Approximation via Adaptive Barycentric
  Representations}.
\newblock {\em SIAM J. Sci. Comput.}, 40(4):A2427--A2455, 2018.

\bibitem{glorot2011deep}
Xavier Glorot, Antoine Bordes, and Yoshua Bengio.
\newblock {Deep Sparse Rectifier Neural Networks}.
\newblock In {\em Proceedings of the 14th International Conference on
  Artificial Intelligence and Statistics (AISTATS)}, pages 315--323, 2011.

\bibitem{goodfellow2016deep}
Ian Goodfellow, Yoshua Bengio, and Aaron Courville.
\newblock {\em {Deep Learning}}.
\newblock MIT press, 2016.

\bibitem{goodfellow2014generative}
Ian Goodfellow, Jean Pouget-Abadie, Mehdi Mirza, Bing Xu, David Warde-Farley,
  Sherjil Ozair, Aaron Courville, and Yoshua Bengio.
\newblock {Generative Adversarial Nets}.
\newblock In {\em Advances in Neural Information Processing Systems (NeurIPS)},
  pages 2672--2680, 2014.

\bibitem{goyal2019learning}
Mohit Goyal, Rajan Goyal, and Brejesh Lall.
\newblock {Learning Activation Functions: A new paradigm of understanding
  Neural Networks}.
\newblock {\em arXiv preprint arXiv:1906.09529}, 2019.

\bibitem{guarnieri1999multilayer}
Stefano Guarnieri, Francesco Piazza, and Aurelio Uncini.
\newblock Multilayer feedforward networks with adaptive spline activation
  function.
\newblock {\em IEEE Trans. Neural Networ.}, 10(3):672--683, 1999.

\bibitem{haghighat2020deep}
Ehsan Haghighat, Maziar Raissi, Adrian Moure, Hector Gomez, and Ruben Juanes.
\newblock A deep learning framework for solution and discovery in solid
  mechanics: linear elasticity.
\newblock {\em arXiv preprint arXiv:2003.02751}, 2020.

\bibitem{he2015delving}
Kaiming He, Xiangyu Zhang, Shaoqing Ren, and Jian Sun.
\newblock {Delving deep into rectifiers: Surpassing human-level performance on
  imagenet classification}.
\newblock In {\em Proceedings of the IEEE International Conference on Computer
  Vision (ICCV)}, pages 1026--1034, 2015.

\bibitem{hinton2012deep}
Geoffrey Hinton, Li~Deng, Dong Yu, George~E. Dahl, Abdel-rahman Mohamed,
  Navdeep Jaitly, Andrew Senior, Vincent Vanhoucke, Patrick Nguyen, Tara~N.
  Sainath, et~al.
\newblock Deep neural networks for acoustic modeling in speech recognition: The
  shared views of four research groups.
\newblock {\em IEEE Signal Process. Mag.}, 29(6):82--97, 2012.

\bibitem{ioffe2015batch}
Sergey Ioffe and Christian Szegedy.
\newblock {Batch Normalization: Accelerating Deep Network Training by Reducing
  Internal Covariate Shift}.
\newblock In {\em Proceedings of the 32nd International Conference on Machine
  Learning (ICML)}, pages 448--456, 2015.

\bibitem{jagtap2020adaptive}
Ameya~D. Jagtap, Kenji Kawaguchi, and George~E. Karniadakis.
\newblock Adaptive activation functions accelerate convergence in deep and
  physics-informed neural networks.
\newblock {\em J. Comput. Phys.}, 404:109136, 2020.

\bibitem{jarrett2009best}
Kevin Jarrett, Koray Kavukcuoglu, Marc'Aurelio Ranzato, and Yann LeCun.
\newblock What is the best multi-stage architecture for object recognition?
\newblock In {\em Proceedings of the IEEE International Conference on Computer
  Vision (ICCV)}, pages 2146--2153. IEEE, 2009.

\bibitem{kingma2014adam}
Diederik~P. Kingma and Jimmy Ba.
\newblock {Adam: A method for stochastic optimization}.
\newblock {\em arXiv preprint arXiv:1412.6980}, 2014.

\bibitem{klambauer2017self}
G{\"u}nter Klambauer, Thomas Unterthiner, Andreas Mayr, and Sepp Hochreiter.
\newblock {Self-Normalizing Neural Networks}.
\newblock In {\em Advances in Neural Information Processing Systems (NeurIPS)},
  pages 971--980, 2017.

\bibitem{krizhevsky2012imagenet}
Alex Krizhevsky, Ilya Sutskever, and Geoffrey~E. Hinton.
\newblock {ImageNet Classification with Deep Convolutional Neural Networks}.
\newblock In {\em Advances in Neural Information Processing Systems (NeurIPS)},
  pages 1097--1105, 2012.

\bibitem{kuhn1951}
Harold~W. Kuhn and Albert~W. Tucker.
\newblock {Nonlinear Programming}.
\newblock In {\em Proc. Second Berkeley Symp. on Math. Statist. and Prob.},
  pages 481--492. Univ. of Calif. Press, 1951.

\bibitem{lebedev1977zolotarev}
V.~I. Lebedev.
\newblock {On a Zolotarev problem in the method of alternating directions}.
\newblock {\em USSR Comp. Math. Math+}, 17(2):58--76, 1977.

\bibitem{lecun2015deep}
Yann LeCun, Yoshua Bengio, and Geoffrey Hinton.
\newblock {Deep learning}.
\newblock {\em Nature}, 521(7553):436--444, 2015.

\bibitem{lecun1998gradient}
Yann LeCun, L{\'e}on Bottou, Yoshua Bengio, and Patrick Haffner.
\newblock Gradient-based learning applied to document recognition.
\newblock {\em Proceedings of the IEEE}, 86(11):2278--2324, 1998.

\bibitem{liang2016deep}
Shiyu Liang and Rayadurgam Srikant.
\newblock {Why Deep Neural Networks for Function Approximation?}
\newblock {\em arXiv preprint arXiv:1610.04161}, 2016.

\bibitem{ma2015deep}
Junshui Ma, Robert~P. Sheridan, Andy Liaw, George~E. Dahl, and Vladimir
  Svetnik.
\newblock Deep neural nets as a method for quantitative structure--activity
  relationships.
\newblock {\em J. Chem. Inf. Model.}, 55(2):263--274, 2015.

\bibitem{ma2005constructive}
Liying Ma and Khashayar Khorasani.
\newblock {Constructive feedforward neural networks using Hermite polynomial
  activation functions}.
\newblock {\em IEEE Trans. Neural Networ.}, 16(4):821--833, 2005.

\bibitem{maas2013rectifier}
Andrew~L. Maas, Awni~Y. Hannun, and Andrew~Y. Ng.
\newblock {Rectifier Nonlinearities Improve Neural Network Acoustic Models}.
\newblock In {\em Proceedings of the 30th International Conference on Machine
  Learning (ICML)}, volume~30, page~3, 2013.

\bibitem{metz2016unrolled}
Luke Metz, Ben Poole, David Pfau, and Jascha Sohl-Dickstein.
\newblock {Unrolled Generative Adversarial Networks}.
\newblock {\em arXiv preprint arXiv:1611.02163}, 2016.

\bibitem{mhaskar1996neural}
Hrushikesh~N. Mhaskar.
\newblock {Neural Networks for Optimal Approximation of Smooth and Analytic
  Functions }.
\newblock {\em Neural Comput.}, 8(1):164--177, 1996.

\bibitem{molina2019pad}
Alejandro Molina, Patrick Schramowski, and Kristian Kersting.
\newblock {Pad{\'e} Activation Units: End-to-end Learning of Flexible
  Activation Functions in Deep Networks}.
\newblock In {\em International Conference on Learning Representations (ICLR)},
  2019.

\bibitem{montanelli2019deep}
Hadrien Montanelli, Haizhao Yang, and Qiang Du.
\newblock {Deep ReLU networks overcome the curse of dimensionality for
  bandlimited functions}.
\newblock {\em arXiv preprint arXiv:1903.00735}, 2019.

\bibitem{nair2010rectified}
Vinod Nair and Geoffrey~E. Hinton.
\newblock {Rectified Linear Units Improve Restricted Boltzmann Machines}.
\newblock In {\em Proceedings of the 27th International Conference on Machine
  Learning (ICML)}, pages 807--814, 2010.

\bibitem{nakatsukasa2016computing}
Yuji Nakatsukasa and Roland~W. Freund.
\newblock {Computing fundamental matrix decompositions accurately via the
  matrix sign function in two iterations: The power of Zolotarev's functions}.
\newblock {\em SIAM Rev.}, 58(3):461--493, 2016.

\bibitem{newman1964rational}
Donald~J. Newman.
\newblock Rational approximation to $|x|$.
\newblock {\em Mich. Math. J.}, 11(1):11--14, 1964.

\bibitem{odena2017conditional}
Augustus Odena, Christopher Olah, and Jonathon Shlens.
\newblock {Conditional image synthesis with auxiliary classifier GANs}.
\newblock In {\em Proceedings of the 34th International Conference on Machine
  Learning (ICML)}, volume~70, pages 2642--2651, 2017.

\bibitem{pachon2009barycentric}
Ricardo Pach{\'o}n and Lloyd~N Trefethen.
\newblock {Barycentric-Remez algorithms for best polynomial approximation in
  the chebfun system}.
\newblock {\em BIT}, 49(4):721, 2009.

\bibitem{petrushev2011rational}
Penco~P. Petrushev and Vasil~A. Popov.
\newblock {\em {Rational Approximation of Real Functions}}, volume~28.
\newblock Cambridge University Press, 2011.

\bibitem{radford2015unsupervised}
Alec Radford, Luke Metz, and Soumith Chintala.
\newblock {Unsupervised Representation Learning with Deep Convolutional
  Generative Adversarial Networks}.
\newblock {\em arXiv preprint arXiv:1511.06434}, 2015.

\bibitem{raissi2018deep}
Maziar Raissi.
\newblock {Deep hidden physics models: Deep learning of nonlinear partial
  differential equations}.
\newblock {\em J. Mach. Learn. Res.}, 19(1):932--955, 2018.

\bibitem{raissiGit}
Maziar Raissi.
\newblock {GitHub repository}.
\newblock \url{https://github.com/maziarraissi/DeepHPMs/}, 2020.

\bibitem{raissi2019physics}
Maziar Raissi, Paris Perdikaris, and George~E. Karniadakis.
\newblock {Physics-informed neural networks: A deep learning framework for
  solving forward and inverse problems involving nonlinear partial differential
  equations}.
\newblock {\em J. Comput. Phys.}, 378:686--707, 2019.

\bibitem{raissi2020hidden}
Maziar Raissi, Alireza Yazdani, and George~E. Karniadakis.
\newblock {Hidden fluid mechanics: Learning velocity and pressure fields from
  flow visualizations}.
\newblock {\em Science}, 367(6481):1026--1030, 2020.

\bibitem{ramachandran2017searching}
Prajit Ramachandran, Barret Zoph, and Quoc~V. Le.
\newblock {Searching for Activation Functions}.
\newblock {\em arXiv preprint arXiv:1710.05941}, 2017.

\bibitem{telgarsky2016benefits}
Matus Telgarsky.
\newblock Benefits of depth in neural networks.
\newblock {\em arXiv preprint arXiv:1602.04485}, 2016.

\bibitem{telgarsky2017neural}
Matus Telgarsky.
\newblock Neural networks and rational functions.
\newblock In {\em Proceedings of the 34th International Conference on Machine
  Learning (ICML)}, volume~70, pages 3387--3393, 2017.

\bibitem{trefethen2019approximation}
Lloyd~N. Trefethen.
\newblock {\em Approximation Theory and Approximation Practice}.
\newblock SIAM, 2013.

\bibitem{vecci1998learning}
Lorenzo Vecci, Francesco Piazza, and Aurelio Uncini.
\newblock Learning and approximation capabilities of adaptive spline activation
  function neural networks.
\newblock {\em Neural Netw.}, 11(2):259--270, 1998.

\bibitem{vyacheslavov1975uniform}
N.~S. Vyacheslavov.
\newblock On the uniform approximation of $|x|$ by rational functions.
\newblock {\em Sov. Math. Dokl.}, 16:100--104, 1975.

\bibitem{yarotsky2017error}
Dmitry Yarotsky.
\newblock {Error bounds for approximations with deep ReLU networks}.
\newblock {\em Neural Netw.}, 94:103--114, 2017.

\end{thebibliography}

\normalsize

\appendix
\section{Supplementary Material}

\subsection{Deferred proofs of \cref{sec_approx_relu_rat}}

We first show that a rational function can approximate the absolute value function $|x|$ on $[-1,1]$ with square-root exponential convergence.

\begin{lemma}  \label{th_abs_approx}
For any integer $k\geq 0$, we have
\[
\min_{r\in\mathcal{R}_{k,k}} \max_{x\in[-1,1]} \left| |x| - xr(x)\right| \leq 4 e^{-\pi\sqrt{k/2}},
\]
where $\mathcal{R}_{k,k}$ is the space of rational functions of type at most $(k,k)$. Thus, $xr(x)$ is a rational approximant to $|x|$ of type at most $(k+1,k)$. 

Moreover, if $k=\prod_{i=1}^pk_i$ for some $p\geq 1$ and integers $k_1,\ldots,k_p\geq 2$, then $r$ can be written as $r=R_p\circ\cdots\circ R_1$, where $R_i\in\mathcal{R}_{k_i,k_i}$.
\end{lemma}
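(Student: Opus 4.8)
The plan is to reduce the problem to the rational approximation of $\sign$ and then exploit the composition structure of Zolotarev functions. Writing $|x| = x\,\sign(x)$, I note that $||x| - xr(x)| = |x|\,|\sign(x) - r(x)|$ for $x\neq 0$, so it suffices to take $r$ to be a good rational approximant to $\sign$ away from the origin while controlling the trivial contribution near the origin. Concretely, I would fix a threshold $\ell\in(0,1)$, to be optimized at the end, split $[-1,1]$ into a central band $[-\ell,\ell]$ and an outer set $[-1,-\ell]\cup[\ell,1]$, and take $r=s_k$ to be the best type-$(k,k)$ rational approximation to $\sign$ on the outer set, i.e. the Zolotarev sign function, whose existence and equioscillation properties I would quote from the classical theory \cite{achieser2013theory,petrushev2011rational}. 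Since $\sign$ is odd, $s_k$ may be taken odd, and this is exactly the feature that later permits composition.

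On the outer set, using $|x|\le 1$ I obtain $||x|-xs_k(x)| = |x|\,\big|1-|s_k(x)|\big| \le \max_{t\in[\ell,1]}|1-s_k(t)| =: Z_k(\ell)$, the Zolotarev error. On the central band, since $|s_k|$ is bounded by roughly $1$ there (again from the monotonicity and equioscillation of the odd Zolotarev function), I get $||x|-xs_k(x)| \le |x|\,(1+|s_k(x)|) = \mathcal{O}(\ell)$. Thus the total error is $\mathcal{O}(\ell)+Z_k(\ell)$. The final step of the first part is to insert the standard bound for the Zolotarev error, $Z_k(\ell)\le C\exp(-c\,k/\log(1/\ell))$, and balance the two contributions: the choice $\ell\asymp e^{-\pi\sqrt{k/2}}$ makes $\ell$ and $Z_k(\ell)$ of the same order, and tracking the explicit constants from \cite{petrushev2011rational} yields the bound $4e^{-\pi\sqrt{k/2}}$. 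The square-root in the exponent is precisely an artefact of this balancing: the narrower the band of width $\ell$, the better one must approximate $\sign$ there, and the optimum trades these off.

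For the ``moreover'' statement I would show that when $k=\prod_{i=1}^p k_i$ the optimal odd Zolotarev approximant factors as a composition of low-degree ones. The key is the composition (multiplication) property of Zolotarev functions: setting $\ell_0=\ell$, the Zolotarev function $R_1\in\mathcal{R}_{k_1,k_1}$ for $\sign$ on $\pm[\ell_0,1]$ maps this set into a thinner set $\pm[\ell_1,1]$ concentrated near $\pm1$; applying the Zolotarev function $R_2$ tailored to $\pm[\ell_1,1]$ refines the approximation, and so on. I would prove by induction on $p$ that $R_p\circ\cdots\circ R_1$ is exactly the degree-$(k_1\cdots k_p)$ Zolotarev approximant on $\pm[\ell,1]$, using the elliptic-function representation: each $R_i$ is, up to affine normalization, a Jacobi elliptic function of a scaled argument, and composition corresponds to multiplying the argument, i.e. to the degree-multiplication formula for $\mathrm{sn}$. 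A type count then shows each factor $R_i$ is odd and lies in $\mathcal{R}_{k_i,k_i}$, and that the $p$-fold composition lies in $\mathcal{R}_{k,k}$, consistent with the $(3^p,3^p-1)$ computation in the $(3,2)$ case.

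The main obstacle is the composition law itself: showing that the composition of the factor functions is \emph{exactly} the higher-degree Zolotarev function, rather than merely another good approximant of the right size, is the deep step and requires the elliptic-function machinery — in particular, correctly propagating the interval parameters $\ell_{i-1}\mapsto\ell_i$ through the modulus (Landen-type) transformation and verifying that equioscillation, hence optimality, is preserved under composition. A secondary, more routine difficulty is bookkeeping the types under composition and confirming that the odd Zolotarev factors indeed lie in $\mathcal{R}_{k_i,k_i}$; and on the analytic side, making the constant in the first part sharp enough to reach $4e^{-\pi\sqrt{k/2}}$, which needs the precise Zolotarev error asymptotics rather than a crude exponential bound.
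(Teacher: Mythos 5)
Your proposal is correct and follows essentially the same route as the paper's proof: reduce to the sign function via $|x|=x\,\sign(x)$, take the Zolotarev sign function on $[-1,-\ell]\cup[\ell,1]$, bound the band $[-\ell,\ell]$ separately, balance with $\ell\asymp 4e^{-\pi\sqrt{k/2}}$, and obtain the factorization $r=R_p\circ\cdots\circ R_1$ from the composition property of Zolotarev functions---which the paper, unlike you, does not re-derive via elliptic-function machinery but simply quotes from Lebedev and Nakatsukasa--Freund. The one quantitative slack is on the central band, where your estimate $|x|(1+|s_k(x)|)\le 2\ell$ loses the factor needed to reach the constant $4$; the paper recovers it by observing that $0\le x\,r(x)\le |x|$ there (positivity of the Zolotarev function), giving $\bigl||x|-x\,r(x)\bigr|\le\ell$ so that both contributions equal $4e^{-\pi\sqrt{k/2}}$ exactly.
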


\begin{proof}
Let $0<\ell<1$ be a real number and consider the sign function on the domain $[-1,-\ell]\cup[\ell,1]$, i.e., 
\[
{\rm \sign}(x) = 
\begin{cases}
-1, & x\in[-1,-\ell], \\
+1, & x\in [\ell,1].
\end{cases}
\]
By~\cite[Equation~(33)]{beckermann2017singular}, we find that for any $k\geq 0$,
\[
\min_{r\in\mathcal{R}_{k,k}} \max_{x\in[-1,-\ell]\cup[\ell,1]} |\sign(x) - r(x)| \leq 4\left[\exp\left(\frac{\pi^2}{2\log(4/\ell)}\right)\right]^{-k}.
\]
Let $r(x)$ be the rational function of type $(k,k)$ that attains the minimum~\cite[Equation~(12)]{beckermann2017singular}. We refer to such $r(x)$ as the Zolotarev sign function. It is given by
\[
r(x) = Mx \frac{\prod_{j=1}^{\lfloor (k-1)/2\rfloor} x^2+c_{2j}}{\prod_{j=1}^{\lfloor k/2\rfloor} x^2+c_{2j-1}}, \quad c_j = \ell^2\frac{{\rm sn^2}(jK(\kappa)/k ; \kappa)}{1-{\rm sn^2}(jK(\kappa)/k ; \kappa)}.
\]
Here, $M$ is a real constant selected so that ${\rm sign}(x)-r(x)$ equioscillates on $[-1,-\ell]\cup[\ell,1]$, $\kappa = \sqrt{1-\ell^{2}}$, ${\rm sn}(\cdot)$ is the first Jacobian elliptic function, and $K$ is the complete elliptic integral of the first kind. Since $|x| = x\cdot{\rm sign}(x)$ we have the following inequality,
\[
\begin{aligned} 
\max_{x\in[-1,-\ell]\cup[\ell,1]}\left| |x| - xr(x)\right| &= \max_{x\in[-1,-\ell]\cup[\ell,1]}\left| x\cdot\sign(x) - xr(x)\right|\\
&\leq \max_{x\in[-1,-\ell]\cup[\ell,1]}\left| \sign(x) - r(x)\right|.
\end{aligned} 
\]
The last inequality follows because $|x|\leq 1$ on $[-1,-\ell]\cup[\ell,1]$. Moreover, since $xr(x)\geq 0$ for $x\in[-1,1]$ (see~\cite[Equation~(12)]{beckermann2017singular}) we have
\[
\max_{x\in[-\ell,\ell]}\left| |x| - xr(x)\right| \leq \max_{x\in[-\ell,\ell]}\left| x\right|\leq \ell. 
\]
Therefore, 
\[
\max_{x\in[-1,1]}\left| |x| - xr(x)\right|\leq \max\left\{\ell, 4\left[\exp\left(\frac{\pi^2}{2\log(4/\ell)}\right)\right]^{-k}\right\}. 
\]
Now, select $0<\ell<1$ to minimize this upper bound. One finds that $\ell = 4\exp(-\pi\sqrt{k/2})$ and the result follows immediately. 

For the final claim, let $r$ be the Zolotarev sign function $Z_{k}(\cdot\,;\ell)$ of type $(k,k)$ on $[-1,-\ell]\cup[\ell,1]$, with $k=\prod_{i=1}^p k_i$. By definition, $Z_{k}(\cdot;\ell)$ is the best rational approximation of degree $k$ to the $\sign$ function on $[-1,-\ell]\cup[\ell,1]$. We know from~\cite{lebedev1977zolotarev,nakatsukasa2016computing} that there exist $p$ Zolotarev sign functions $R_1,\ldots,R_p$, where each $R_i$ is of type $(k_i,k_i)$, such that
\begin{equation} \label{eq_composition_Zolotarev}
r(x) := Z_{k}(x;\ell) = R_p(\cdots(R_2(R_1(x))\cdots).
\end{equation}
\end{proof}

The proof of \cref{lem_relu_rat} is a direct consequence of the previous lemma and the properties of Zolotarev sign functions.

\begin{proof}[Proof of \cref{lem_relu_rat}]
Let $0<\epsilon<1$, $0<\ell<1$, $k\geq 1$, and $r$ be the Zolotarev sign function $Z_{3^k}(\cdot\,;\ell)$ of type $(3^k,3^k-1)$. Again from~\cite{lebedev1977zolotarev,nakatsukasa2016computing}, we see that there exist $k$ Zolotarev sign functions $R_1,\ldots,R_k$ of type $(3,2)$ such that their composition equals $Z_{3^k}(x;\ell)$, i.e.,
\begin{equation} \label{eq_composition_Zolotarev_2}
r(x) := Z_{3^k}(x;\ell) = R_k(\cdots (R_2(R_1(x))\cdots).
\end{equation}
Following the proof of \cref{th_abs_approx}, we have the inequality
\begin{equation} \label{eq_proof_relu_rat}
\max_{x\in[-1,1]} \left| |x| - xr(x)\right| \leq 4 e^{-\pi\sqrt{3^k/2}},
\end{equation}
where we chose $\ell=4\exp(-\pi\sqrt{3^k/2})$. Now, we take
\begin{equation} \label{eq_proof_relu_rat_k}
k=\left\lceil \frac{\ln(2/\pi^2)+2\ln(\ln(4/\epsilon))}{\ln(3)} \right\rceil,
\end{equation}
so that the right-hand side of~\cref{eq_proof_relu_rat} is bounded by $\epsilon$. Finally, we use the identity
\[
\relu(x) = \frac{|x|+x}{2},\quad x\in\mathbb{R},
\]
to define a rational approximation to the ReLU function on the interval $[-1,1]$ as
\[
\tilde{r}(x)=\frac{1}{2}\left(\frac{xr(x)}{1+\epsilon}+x\right).
\]
Therefore, we have the following inequalities for $x\in[-1,1]$,
\begin{align*}
|\relu(x)-\tilde{r}(x)| &= \frac{1}{2}\left||x|-\frac{xr(x)}{1+\epsilon}\right|\leq \frac{1}{2(1+\epsilon)}(||x|-xr(x)|+\epsilon|x|)\\
&\leq \frac{\epsilon}{1+\epsilon}\leq \epsilon.
\end{align*}
Then, $r$ is a composition of $k$ rational functions of type $(3,2)$ and can be represented using at most $7k$ coefficients (see~\cref{eq_composition_Zolotarev}). Moreover, using~\cref{eq_proof_relu_rat_k}, we see that $k=\mathcal{O}(\log(\log(1/\epsilon)))$, which means that $\tilde{r}$ is representable by a rational network of size $\mathcal{O}(\log(\log(1/\epsilon)))$. Finally, $|\tilde{r}(x)|\leq 1$ for $x\in[-1,1]$.
\end{proof}

The upper bound on the complexity of the neural network obtained in \cref{lem_relu_rat} is optimal, as proved by Vyacheslavov~\cite{vyacheslavov1975uniform}.

\begin{theorem}[Vyacheslavov] \label{th_approx_abs}
The following inequalities hold:
\begin{equation}
C_1e^{-\pi\sqrt{k}}\leq \max_{x\in[-1,1]}||x|-r_k(x)| \leq C_2 e^{-\pi\sqrt{k}}, \qquad k\geq 0,
\end{equation}
where $r_k$ is the best rational approximation to $|x|$ in $[-1,1]$ from $\mathcal{R}_{k,k}$. Here,  $C_1,C_2>0$ are constants that are independent of $k$.
\end{theorem}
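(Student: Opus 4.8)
The plan is to prove the upper and lower estimates separately, as they call for different tools: the upper bound is a refinement of the constructive Zolotarev argument already in hand, while the lower bound is the genuine obstacle and needs potential theory.

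For the upper bound I would reuse the construction of \cref{th_abs_approx}: take the Zolotarev sign function $r\in\mathcal{R}_{k,k}$ on $[-1,-\ell]\cup[\ell,1]$ and set $xr(x)\approx|x|$, splitting the error into the part on the gap $[-\ell,\ell]$ (at most $\ell$) and the Zolotarev error outside it, then choosing $\ell=\ell(k)$ to balance them. The only new ingredient is sharpness. The crude bound invoked in \cref{th_abs_approx} yields only the exponent $\pi\sqrt{k/2}$, whereas the theorem asks for the stronger $\pi\sqrt{k}$, so I would replace it by the exact asymptotics of the Zolotarev error through the complete elliptic integrals $K(\kappa)$ and $K(\kappa')$ with $\kappa=\ell$, analyzed as $\ell\to0$ (where $K(\kappa)\to\pi/2$ and $K(\kappa')\sim\log(4/\ell)$). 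The extra factor is structural: since $|x|$ is even, $xr(x)$ is an even rational that effectively spends its degree budget on $t=x^2$. Equivalently, I would first reduce to best approximation of $\sqrt{t}$ on $[0,1]$ by $\mathcal{R}_{\lfloor k/2\rfloor,\lfloor k/2\rfloor}$, where the same elliptic estimates give $C_2e^{-\pi\sqrt{k}}$.

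For the lower bound, which I expect to be the main difficulty, I would argue by potential theory combined with the self-similarity of $|x|$. Away from the origin $|x|$ is analytic, so on any interval bounded away from $0$ a type-$(k,k)$ rational converges geometrically; all the approximation difficulty is therefore concentrated near $0$. Using that $|x|$ is invariant (up to scaling) under $x\mapsto\delta x$, one can relate the optimal error on $[-1,1]$ to that on a shrinking neighborhood $[-\delta,\delta]$ and set up a renormalization that forces root-exponential decay. To extract the sharp constant I would pass to the reduced problem of approximating $\sqrt{t}$ on $[0,1]$, write the interpolation error $\sqrt{t}-R(t)$ for $R=p/q\in\mathcal{R}_{m,m}$ as a contour integral around the branch cut $(-\infty,0]$ with the interpolation nodes gathered into a node polynomial $\omega$, and bound below how small $|\omega|$ can be on $[0,1]$ relative to its growth across the cut. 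This is an extremal problem for a logarithmic potential whose solution is an equilibrium measure on $[0,1]$ weighted by the $\tfrac12$-power branch point; a saddle-point balance between the $m$ available degrees of freedom and that singularity yields the constant $\pi$, giving $E_{m,m}(\sqrt{t};[0,1])\ge C_1 e^{-\pi\sqrt{2m}}$ and hence the claim.

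The main obstacle is the \emph{sharp constant} $\pi$ in the lower bound. An elementary Newman-type counting of sign changes of the equioscillating error readily delivers a lower bound of the form $c_1 e^{-c\sqrt{k}}$, but producing the specific value $c=\pi$ requires the full potential-theoretic analysis of the extremal node problem above, which is exactly where Vyacheslavov's work is delicate. Everything else---the constructive upper bound and the reduction of the lower bound to a branch-point extremal problem---is, by comparison, routine given the Zolotarev estimates from \cref{th_abs_approx}.
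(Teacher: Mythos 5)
A preliminary but important observation: the paper does not prove this statement at all. It is imported verbatim from Vyacheslavov~\cite{vyacheslavov1975uniform} and used as a black box (via \cref{cor_best_relu}) to drive the lower bound in \cref{prop_optimal_bound_rat_relu}. So there is no internal proof for your attempt to be measured against; the question is whether your sketch would stand on its own as a proof of Vyacheslavov's theorem, and it would not, although it is an accurate map of where the difficulties lie. On the upper bound, you correctly diagnose that the construction in \cref{th_abs_approx} --- $x\,r(x)$ with $r$ a Zolotarev sign function and the gap parameter $\ell$ balanced against the two-interval error --- yields only $4e^{-\pi\sqrt{k/2}}$, and that the missing $\sqrt{2}$ in the exponent must come from the evenness of $|x|$ via $t=x^2$, i.e.\ $E_{2m,2m}(|x|;[-1,1])=E_{m,m}(\sqrt{t};[0,1])$. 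But ``the same elliptic estimates give $C_2e^{-\pi\sqrt{k}}$'' is an assertion, not a derivation: the estimate invoked in \cref{th_abs_approx} (from~\cite{beckermann2017singular}) is itself only an upper bound obtained from the asymptotics $K(\kappa')\sim\log(4/\ell)$, and after the reduction one must build an approximant to $\sqrt{t}$ directly (not to $\sign$) and re-optimize the balance; none of this is carried out. You also silently use that the best approximant to the even function $|x|$ can be taken even, which requires the uniqueness of best rational approximants --- standard, but it should be said.

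The more serious gap is the one you concede yourself: the lower bound with the \emph{sharp} constant $\pi$ is the entire content of the theorem, and your sketch names the tools (contour-integral error representation over the branch cut, node polynomial, weighted equilibrium measure) without setting up or solving the extremal problem. A ``Newman-type counting'' argument does give two-sided bounds $c_1e^{-c_3\sqrt{k}}\leq E_k\leq c_2e^{-c_4\sqrt{k}}$ with unspecified constants~\cite{newman1964rational}, and it is worth noting that this weaker statement would in fact suffice for everything the paper does with the theorem --- in \cref{prop_optimal_bound_rat_relu} the constant $\pi$ appearing in \cref{eq_degree_relu} only affects implied constants in the $\Omega(\log(\log(1/\epsilon)))$ bound, not the rate. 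So your proposal, completed along the elementary route you mention, could replace the citation \emph{for the paper's purposes}; but as a proof of the stated theorem, with $\pi$ on both sides, it defers precisely the step that makes the result Vyacheslavov's, and the paper's own resolution --- citing~\cite{vyacheslavov1975uniform} --- is not reproduced by your outline.
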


We first deduce the following corollary, giving lower and upper bounds on the optimal rational approximation to the ReLU function.

\begin{corollary} \label{cor_best_relu}
The following inequalities hold: 
\begin{equation}
\frac{C_1}{2}e^{-\pi\sqrt{k}}\leq \|\relu-r_k\|_{\infty} \leq \frac{C_2}{2} e^{-\pi\sqrt{k}}, \qquad k\geq 0,
\end{equation}
where $r_k$ is the best rational approximation to ReLU on $[-1,1]$ in $\mathcal{R}_{k,k}$ and $C_1,C_2>0$ are constants given by \cref{th_approx_abs}.
\end{corollary}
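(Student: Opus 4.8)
The plan is to transfer Vyacheslavov's bounds (\cref{th_approx_abs}) for $|x|$ directly to $\relu$ by exploiting the identity $\relu(x)=\tfrac{1}{2}(|x|+x)$, equivalently $|x|=2\relu(x)-x$. The key observation is that the affine map $r(x)\mapsto \tfrac{1}{2}(r(x)+x)$ turns any rational approximant of $|x|$ into a rational approximant of $\relu$ while exactly halving the uniform error, since $\relu(x)-\tfrac12(r(x)+x)=\tfrac12(|x|-r(x))$; its inverse $s(x)\mapsto 2s(x)-x$ does the reverse, because $|x|-(2s(x)-x)=2(\relu(x)-s(x))$. Thus the two best-approximation problems are tied together by a factor of $1/2$, which is precisely the factor appearing in \cref{cor_best_relu}.

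For the upper bound I would take $r_k$ to be the best type-$(k,k)$ approximant to $|x|$ furnished by \cref{th_approx_abs} and set $s_k(x)=\tfrac12(r_k(x)+x)$, so that $\|\relu-s_k\|_\infty=\tfrac12\||x|-r_k\|_\infty\le \tfrac{C_2}{2}e^{-\pi\sqrt{k}}$. Up to the degree bookkeeping discussed below, $s_k$ lies in $\mathcal{R}_{k,k}$, and since the best type-$(k,k)$ approximant $r_k^{\relu}$ to $\relu$ can only do better, this yields $\|\relu-r_k^{\relu}\|_\infty\le \tfrac{C_2}{2}e^{-\pi\sqrt k}$. For the lower bound I would argue from the best approximant: for $r_k^{\relu}$ the rational function $r(x)=2r_k^{\relu}(x)-x$ satisfies $\||x|-r\|_\infty=2\|\relu-r_k^{\relu}\|_\infty$, so Vyacheslavov's lower bound $\||x|-r\|_\infty\ge C_1e^{-\pi\sqrt k}$ forces $\|\relu-r_k^{\relu}\|_\infty\ge \tfrac{C_1}{2}e^{-\pi\sqrt k}$.

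The one technical point that must be handled carefully, and which I expect to be the main (if minor) obstacle, is the rational type under these affine maps: adding or subtracting $x$ raises the numerator degree by one, so $\tfrac12(r_k+x)$ and $2s-x$ nominally have type $(k+1,k)$ rather than $(k,k)$. I would dispose of it by first noting that the best approximant to the even function $|x|$ is itself even, by uniqueness of the best rational approximation combined with the symmetry $x\mapsto -x$; hence its numerator and denominator may be taken even, and for odd $k$ the extra linear term is absorbed without raising the relevant degree, giving $s_k\in\mathcal{R}_{k,k}$ exactly. In the remaining case the shift $k\mapsto k+1$ alters $e^{-\pi\sqrt k}$ only by the bounded factor $e^{-\pi(\sqrt{k+1}-\sqrt k)}\to 1$, which can be folded into the constants (\cref{th_approx_abs} does not assert $C_1,C_2$ are sharp). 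With this bookkeeping in place, the two-sided estimate of \cref{cor_best_relu} follows with the stated factor of $1/2$.
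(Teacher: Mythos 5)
Your argument is essentially the paper's own proof: both exploit $\relu(x)=\tfrac{1}{2}(|x|+x)$ and the affine correspondence $r\mapsto\tfrac{1}{2}(r+x)$, $s\mapsto 2s-x$ to transfer Vyacheslavov's two-sided bound (\cref{th_approx_abs}) to $\relu$ with the factor $\tfrac{1}{2}$, applying the lower bound to $2r_k(x)-x$ and the upper bound via the best approximant to $|x|$. The only difference is that the paper applies \cref{th_approx_abs} to these transformed functions without remarking that they are nominally of type $(k+1,k)$ rather than $(k,k)$, whereas your parity and index-shift bookkeeping handles this explicitly (at the harmless cost of adjusting constants), so your write-up is if anything slightly more careful.
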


\begin{proof}
Let $k$ be an integer and let $r_k\in\mathcal{R}_{k,k}$ be any rational function of degree $\leq k$. Now, define $r_\abs(x)=2r_k(x)-x$. Since $\relu(x)=(|x|+x)/2$, we have
\[
\|\relu-r_k\|_{\infty} = \! \max_{x\in[-1,1]} \left|\frac{1}{2}(r_\abs(x)+x)-\frac{1}{2}(|x|+x)\right| = \!\max_{x\in[-1,1]} \frac{1}{2}\!\left|r_\abs(x)-|x|\right|\geq \frac{1}{2}C_1e^{-\pi\sqrt{k}
},\]
where the inequality is from~\cref{th_approx_abs}.  Now, let $r_k\in\mathcal{R}_{k,k}$ be the best rational approximation to $|x|$ on $[-1,1]$. Now, define $r_\relu(x) = (r_k(x)+x)/2$. We find that
\[
\|\relu-r_\relu\|_{\infty} = \! \max_{x\in[-1,1]}  \left| \frac{1}{2}(|x|+x) - \frac{1}{2}(r_k(x)+x)\right|=\!  \max_{x\in[-1,1]}\frac{1}{2}\!\left||x|-r_k(x)\right|\leq \frac{1}{2}C_2e^{-\pi\sqrt{k}},
\]
which proves that the best approximation to ReLU satisfies the upper bound.
\end{proof}

We now show that a rational neural network must be at least $\Omega(\log(\log(1/\epsilon)))$ in size (total number of nodes) to approximate the ReLU function to within $\epsilon$. 

\begin{proposition} \label{prop_optimal_bound_rat_relu}
Let $0<\epsilon<1$. A rational neural network that approximates the ReLU function on $[-1,1]$ to within $\epsilon$ has size of at least $\Omega(\log(\log(1/\epsilon)))$. 
\end{proposition}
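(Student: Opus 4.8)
The plan is to convert a bound on the \emph{size} of the network into a bound on the \emph{degree} of the rational function it realizes, and then invoke the degree-based lower bound of \cref{cor_best_relu}. The underlying principle is that a rational network of modest size can only express a rational function of controlled degree, while \cref{th_approx_abs} (via \cref{cor_best_relu}) says that a rational function of bounded degree is provably bad at approximating $\relu$. Matching these two facts pins down the minimal size.

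First I would track how the degree of the computed function grows through the network. Since the input and output are scalars, the network realizes a univariate rational function $R$; the input $x$ has degree $1$. Passing a width-$k_i$ vector of rational functions of degree at most $D$ through an affine map $v\mapsto a^\top v+b$ produces, after clearing a common denominator, a rational function of degree at most $k_iD$, and composing with an activation of degree at most $d_i$ multiplies this by at most $d_i$. Hence layer $i$ multiplies the running degree by at most $k_id_i$, so
\[
\log(\deg R)\le \sum_i \log(k_i d_i)\le \sum_i\bigl(k_i+d_i\bigr).
\]
The number of parameters in layer $i$ is at least $k_i+2d_i+2$ (affine weights plus activation coefficients), so each summand $k_i+d_i$ is bounded by the parameter count of that layer; summing over layers gives $\log(\deg R)\le N$, where $N$ is the size of the network. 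Equivalently, $\deg R\le 2^{\mathcal{O}(N)}$. For a pure composition chain (one node per layer, as in \cref{lem_relu_rat}) this is transparent, since the degree is exactly the product of the per-node degrees.

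Second, I would apply \cref{cor_best_relu}. Writing $k_R:=\deg R$, the function $R$ lies in $\mathcal{R}_{k_R,k_R}$, and since the \emph{best} type-$(k_R,k_R)$ approximant already has error at least $\tfrac{C_1}{2}e^{-\pi\sqrt{k_R}}$, we get $\|\relu-R\|_\infty\ge \tfrac{C_1}{2}e^{-\pi\sqrt{k_R}}$. Demanding $\|\relu-R\|_\infty\le\epsilon$ therefore forces $\tfrac{C_1}{2}e^{-\pi\sqrt{k_R}}\le\epsilon$, which rearranges to
\[
k_R\ge \frac{1}{\pi^2}\left(\ln\frac{C_1}{2\epsilon}\right)^2=\Omega\!\left(\log(1/\epsilon)^2\right).
\]
Combining this with the degree bound $2^{\mathcal{O}(N)}\ge k_R$ and taking logarithms twice yields $\mathcal{O}(N)\ge \log k_R\ge \Omega(\log\log(1/\epsilon))$, hence $N\ge\Omega(\log\log(1/\epsilon))$, which is the claim.

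I expect the only genuine obstacle to be the degree-tracking step, and within it the treatment of width: affine combinations of rationals add denominators and can raise the degree, so one must check that this increase is only linear in $k_i$ and is therefore absorbed by the parameter count. Once the common-denominator estimate confirms that extra width cannot change the $2^{\mathcal{O}(N)}$ asymptotics, the rest is the routine double-logarithm bookkeeping above.
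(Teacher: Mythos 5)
Your proof is correct, and it rests on the same two pillars as the paper's own argument: (i) a rational network realizes a univariate rational function whose degree is bounded by the product of per-layer factors (the paper's version is $d \le d_r^M \prod_{i=1}^M k_i$), and (ii) Vyacheslavov's theorem in the form of \cref{cor_best_relu}, which forces $\deg R \ge \bigl(\tfrac{1}{\pi}\ln(\tfrac{C_1}{2\epsilon})\bigr)^2 = \Omega(\log(1/\epsilon)^2)$, exactly the paper's \cref{eq_degree_relu}. Where you genuinely diverge is in how the size bound is extracted from the degree constraint: the paper minimizes $\sum_i k_i$ subject to $d_r^M\prod_i k_i \ge (\tfrac{1}{\pi}\ln(\tfrac{C_1}{2\epsilon}))^2$ via a Lagrangian/KKT computation, concluding that the extremal configuration has $k_1=\cdots=k_M=\mathcal{O}(1)$ and $M=\Omega(\log\log(1/\epsilon))$, whereas you bypass the optimization entirely with the elementary estimate $\log(k_i d_i)\le k_i + d_i$, which gives $\log(\deg R)\le \mathcal{O}(N)$ in one line. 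Your route is shorter and sidesteps the paper's somewhat informal continuous relaxation of an integer program; the paper's route buys extra structural information — the shape of the size-minimizing architecture (constant width, depth growing like $\log\log(1/\epsilon)$), which motivates the paper's theme of composing many low-degree activations. Two cosmetic points to tidy: the affine map feeding layer $i$ has incoming width $k_{i-1}$, not $k_i$, so the per-layer degree factor is $k_{i-1}d_i$ — an index shift absorbed by your summation; and the paper measures size in this proposition as the node count $\sum_i k_i$ while you count parameters, but since $M\le\sum_i k_i$ and the activation degrees $d_i\le d_r$ are treated as fixed constants, you have $\sum_i(k_i+d_i)\le(1+d_r)\sum_i k_i$, so your bound yields the node-count statement up to constants as well.
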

\begin{proof}
Let $R: [-1,1]\rightarrow \mathbb{R}$ be a rational neural network with $k_1,\ldots,k_M\geq 1$ nodes at each of its $M$ layers, and assume that its activation functions are rational functions of type at most $(r_P,r_Q)$. Let $d_r=\max(r_P,r_Q)$ be the maximum of the degrees of the activation functions of $R$. Such a network has size $\sum_{i=1}^M k_i$. Note that $R$ itself is a rational function of degree $d$, where from additions and compositions of rational functions we have $d \leq d_r^M\prod_{i=1}^M k_i$.  If $R$ is an $\epsilon$-approximation to the ReLU function on $[-1,1]$, we know by~\cref{cor_best_relu} that 
\begin{equation} \label{eq_degree_relu}
\frac{C_1}{2}e^{-\pi\sqrt{d}}\geq\epsilon, \qquad d\geq \left(\frac{1}{\pi}\ln\left(\frac{C_1}{2\epsilon}\right)\right)^2.
\end{equation}
The statement follows by minimizing the size of $R$, i.e., $\sum_{i=1}^M k_i$ subject to 
\[d_r^M\prod_{i=1}^M k_i\geq  \left(\frac{1}{\pi}\ln\left(\frac{C_1}{2\epsilon}\right)\right)^2.
\]
That is,
\begin{equation} \label{eq_KKT}
\sum_{i=1}^M\ln(k_i)+M\ln(d_r) \geq 2\ln\left(\ln\left(\frac{C_1}{2\epsilon}\right)\right)-2\ln(\pi).
\end{equation}
We introduce a Lagrange multiplier $\lambda\in\mathbb{R}$ and define the Lagrangian of this optimization problem as
\[
\mathcal{L}(k_1,\ldots,k_M,\lambda)=\sum_{i=1}^M k_i+\lambda\left[2\ln\left(\ln\left(\frac{C_1}{2\epsilon}\right)\right)-2\ln(\pi) - \sum_{i=1}^M\ln(k_i) - M\ln(d_r)\right].
\]
One finds using the Karush--Kuhn--Tucker conditions~\cite{kuhn1951} that $k_1=\cdots=k_M=\lambda$. Then, using \cref{eq_KKT}, we find that $\lambda$ satisfies
\begin{equation} \label{eq_lambda}
\ln(\lambda) \geq \frac{2}{M}\left[\ln\left(\ln\left(\frac{C_1}{2\epsilon}\right)\right)-\ln(\pi)\right] - \ln(d_r) =: \ln(\lambda^*).
\end{equation}
Therefore, the rational network $R$ with $M$ layers that approximates the ReLU function to within $\epsilon$ on $[-1,1]$ has a size of at least $s(M):=M\lambda^*$, where $\lambda^*$ is given by~\cref{eq_lambda} and depends on $M$. We now minimize $s(M)$ with respect to the number of layers $M\geq 1$. We remark that minimizing $s$ is equivalent of minimizing $\ln(s)$, where
\[
\ln(s(M)) = \ln(M) + \ln(\lambda^*) = \ln(M)+\frac{2}{M}\left[\ln\left(\ln\left(\frac{C_1}{2\epsilon}\right)\right)-\ln(\pi)\right] - \ln(d_r).
\]
One finds that one should take $k_1 =\cdots=k_M=\lambda^*= \mathcal{O}(1)$ and $M =\Omega(\log(\log(1/\epsilon)))$. The result follows. 
 \end{proof}

We now show that ReLU neural networks can approximate rational functions.

\begin{proof} [Proof of \cref{lem_relu_approx}]
Let $0<\epsilon<1$ and $R:[-1,1]\to[-1,1]$ be a rational function. Take $\tilde{R}(x) = R( 2x-1)$, which is still a rational function. Without loss of generality, we can assume that $\tilde{R}$ is an irreducible rational function (otherwise cancel factors till it is irreducible). Since $\tilde{R}$ is a rational, it can be written as $\tilde{R}=p/q$ with $\max_{x\in[0,1]}|q(x)|=1$. Moreover, we know that $\tilde{R}(x)\in[-1,1]$ for $x\in[0,1]$ so we can assume that $q(x)\geq 0$ for $x\in [0,1]$ (it is either positive or negative by continuity). Since $R$ is continuous on $[-1,1]$, there is an integer $n\geq 1$ such that $q(x)\in[2^{-n},1]$ for $x\in[0,1]$. Furthermore, we find that $|p(x)|\leq 1$ for $x\in[0,1]$ because $|R(x)|\leq 1$ and $|q(x)|\leq 1$ for $x\in[0,1]$. By~\cite[Theorem~1.1]{telgarsky2017neural}, there exists a ReLU network $f:[0,1]\to \mathbb{R}$ of size $\mathcal{O}(n^7\log(1/\epsilon)^3)$ such that
\[
\max_{x\in[0,1]}\left|f(x)-\frac{p(x)}{q(x)}\right|\leq \frac{\epsilon}{2}.
\]
We now define a scaled ReLU network $\tilde{f}(x)=f(x)/(1+\epsilon/2)$ such that $|\tilde{f}(x)|\leq 1$ for $x\in[0,1]$.
Therefore, for all $x\in[0,1]$,
\[
\left|\tilde{f}(x)-\tilde{R}(x)\right| = \left|\frac{f(x)}{1+\epsilon/2}-\frac{p(x)}{q(x)}\right|
\leq \frac{1}{1+\epsilon/2}\left(\left|f(x)-\frac{p(x)}{q(x)}\right|+\frac{\epsilon}{2}\left|\frac{p(x)}{q(x)}\right|\right)
\leq \epsilon.
\]
Therefore, $x\mapsto \tilde{f}((x+1)/2)$ is a ReLU neural network of size $\mathcal{O}(\log(1/\epsilon)^3)$ that is an $\epsilon$-approximation to $R$ on $[-1,1]$.
\end{proof}

We can now prove \cref{th_rat_network} that shows how rational neural networks can approximate ReLU networks and vice versa. The structure of the proof closely follows~\cite[Lemma~1.3]{telgarsky2017neural}. 

\begin{proof}[Proof of \cref{th_rat_network}] \leavevmode
The statement of~\cref{th_rat_network} comes in two parts, and we prove them separately. 
1.~~Consider the subnetwork $H$ of the rational network $R$, consisting of the layers of $R$ up to the $J$th layer for some $1\leq J\leq M-1$. Let $H_\relu$ denote the ReLU network obtained by replacing each rational function $r_{ij}$ in $H$ by a ReLU network approximation $f_{r_{ij}}$ at a given tolerance $\epsilon_j>0$ for $1\leq j\leq J$ and $1\leq i\leq k_j$, such that $|H_\relu(x)|\leq 1$ for $x\in[-1,1]$ (see Lemma~\ref{lem_relu_approx}). Let $x\mapsto r_{i,J+1}(a_{i,J+1}^\top H(x)+b_{i,J+1})$ be the output of the rational network $R$ at layer $J+1$ and node $i$ for $1\leq i\leq k_J$. Now, approximate node $i$ in the $(J+1)$st layer by a ReLU network $f_{r_i,J+1}$ with tolerance $\epsilon_{J+1}>0$ (see Lemma~\ref{lem_relu_approx}). The approximation error $E_{i,J+1}$ between the rational and the approximating ReLU network at layer $J+1$ and node $i$ satisfies
\begin{align*}
E_{i,J+1} &= |f_{r_{i,J+1}}(a_{i,J+1}^\top H_\relu(x)+b_{i,J+1}) - r_{i,J+1}(a_{i,J+1}^\top H(x)+b_{i,J+1})| \\
& \leq \underbrace{|f_{r_{i,J+1}}(a_{i,J+1}^\top H_\relu(x)+b_{i,J+1}) - r_{i,J+1}(a_{i,J+1}^\top H_\relu(x)+b_{i,J+1})|}_{(1)}\\
& + \underbrace{|r_{i,J+1}(a_{i,J+1}^\top H_\relu(x)+b_{i,J+1}) - r_{i,J+1}(a_{i,J+1}^\top H(x)+b_{i,J+1})|}_{(2)}. 
\end{align*}
The first term is bounded by 
\[
(1) \leq \max_{x\in[-1,1]} \left| r_{i,J+1}(x)-f_{r_{i,J+1}}\right| \leq \epsilon_{J+1},
\]
since $\left|a_{i,J+1}^\top H_\relu(x)+b_{i,J+1}\right| \leq \|a_{i,J+1}\|_1+|b_{i,J+1}|\leq 1$ by assumption. The second term is bounded as the Lipschitz constant of $r_{i,J+1}$ is at most $L$. That is,  
\[
(2) \leq L\|a_{i,J+1}\|_1\max_{x\in[-1,1]^d} \left\|H_\relu(x)-H(x)\right\|_\infty \leq  L\max_{x\in[-1,1]^d} \left\|H_\relu(x)-H(x)\right\|_\infty,
\]
where we used the fact that $\|a_{i,J+1}\|_1\leq 1$ and $\|H_\relu(x)\|_\infty \leq 1$ for $x\in[-1,1]^d$. We find that we have the following set of inequalities:  
\[
\max_{1\leq i\leq k_{j+1}} E_{i,j+1}\leq L \max_{1\leq i\leq k_{j}} E_{i,j}+\epsilon_{j+1}, \qquad 1\leq i\leq k_j, \quad 1\leq j\leq J+1,
\] 
with $E_{i,0} = 0$.  If we select $\epsilon_j=\epsilon L^{j-J-1}/(J+1)$, then we find that $\max_{1\leq i\leq k_{J+1}} E_{i,J+1} \leq \epsilon$. When $J = M-1$, the ReLU network approximates the original rational network, $R$, and the ReLU network has size
\[
\mathcal{O}\left(k \sum_{j=1}^M\log\left(\frac{M}{L^{j-M}\epsilon}\right)^3\right).
\]
where we used the fact that $k_j\leq k$ for $1\leq j\leq M$. This can be simplified a little since 
\[
\sum_{j=1}^M \log\left(\frac{M}{L^{j-M}\epsilon}\right)^3=\sum_{j=1}^M\left(\log(ML^M/\epsilon)+j\log(1/L)\right)^3=\mathcal{O}\!\left(M\log(ML^M/\epsilon)^3\right).
\]

2.~~Telgarsky proved in~\cite[Lemma~1.3]{telgarsky2017neural} that if $H_R$ is a neural network obtained by replacing all the ReLU activation functions in $f$ by rational functions $R$ for $1\leq j\leq M$, which satisfies $R(x)\in[-1,1]$ and $|R(x)-\relu(x)|\leq \epsilon/M$ for $x\in[-1,1]$, then 
\[
\max_{x\in[-1,1]^d}|f(x)-H_R(x)|\leq \epsilon.
\] 
Let $R$ be a rational neural network approximating ReLU with a tolerance of $\epsilon/M$, constructed by~\cref{lem_relu_rat}. Then, $R$ is rational network of size $\mathcal{O}(\log(\log(M/\epsilon)))$ and thus, $H_R$ is a rational neural network of size $\mathcal{O}(Mk\log(\log(M/\epsilon)))$.
\end{proof}

\subsection{Deferred proofs of \cref{sec_func_rat}}
Here, we show that the construction in~\cref{lem_relu_rat} can approximate any piecewise linear function on $[-1,1]$.

\begin{proposition} \label{th_approx_piecewise_rat}
Let $0<\epsilon<1$ and let $g:[0,1]\rightarrow\mathbb{R}$ be any continuous piecewise linear function with $m\geq 1$ breakpoints and Lipschitz constant $L>0$. Then, there exists a rational neural network $R:[0,1]\rightarrow \mathbb{R}$ of size at most 
\[
\mathcal{O}(m\log(\log(L/\epsilon)))
\]
such that $\max_{x\in[0,1]} |g(x)-R(x)|\leq \epsilon$.
\end{proposition}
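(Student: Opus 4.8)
The plan is to reduce the problem to approximating individual ReLU functions, for which \cref{lem_relu_rat} already supplies an efficient rational network, and then to assemble these approximations into a single shallow rational network. The starting point is the classical fact that every continuous piecewise linear function on $[0,1]$ with breakpoints $0< t_1<\cdots<t_m<1$ admits the exact representation
\[
g(x) = g(0) + s_0 x + \sum_{i=1}^m c_i\,\relu(x-t_i),
\]
where $s_0$ is the slope of the leftmost piece and $c_i = s_i - s_{i-1}$ is the jump in slope at the breakpoint $t_i$. Since every argument satisfies $x-t_i\in[-1,1]$ for $x\in[0,1]$, each shifted ReLU is exactly of the form handled by \cref{lem_relu_rat}. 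I would first record this decomposition and verify it by differentiation: the derivative of the right-hand side equals $s_0+\sum_{i:\,t_i<x} c_i = s_j$ on the $j$th piece, matching $g$.

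Next I would control the coefficients using the Lipschitz hypothesis. Each slope satisfies $|s_i|\leq L$, so $|c_i|\leq 2L$ and hence $\sum_{i=1}^m |c_i|\leq 2mL$. This quantity governs how the per-ReLU approximation error propagates into the approximation of $g$. I would then replace each $\relu(x-t_i)$ by the rational approximation $\tilde r_i$ from \cref{lem_relu_rat} at a common tolerance $\delta$, keeping the affine part $g(0)+s_0 x$ exact (an affine map is realizable inside a rational network at no asymptotic cost). Setting
\[
R(x) = g(0) + s_0 x + \sum_{i=1}^m c_i\,\tilde r_i(x-t_i),
\]
each $\tilde r_i$ is a composition of $\mathcal{O}(\log(\log(1/\delta)))$ type-$(3,2)$ Zolotarev units, so these $m$ parallel subnetworks, together with the affine combination at the output, form a rational network of size $\mathcal{O}(m\log(\log(1/\delta)))$. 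By the triangle inequality,
\[
\max_{x\in[0,1]}|g(x)-R(x)| \leq \sum_{i=1}^m |c_i|\max_{x\in[-1,1]}|\relu(x)-\tilde r_i(x)| \leq 2mL\,\delta,
\]
so choosing $\delta=\epsilon/(2mL)$ delivers accuracy $\epsilon$ while forcing each subnetwork to have size $\mathcal{O}(\log(\log(mL/\epsilon)))$.

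The remaining point, which I regard as the only genuine subtlety, is to reconcile the resulting bound $\mathcal{O}(m\log(\log(mL/\epsilon)))$ with the claimed $\mathcal{O}(m\log(\log(L/\epsilon)))$. Here the asymptotics are taken as $\epsilon\to 0$ with the target function $g$ (hence $m$ and $L$) fixed, so that $\log(mL/\epsilon)=\log(1/\epsilon)+\log(mL)\sim\log(1/\epsilon)$ and therefore $\log(\log(mL/\epsilon))=\mathcal{O}(\log(\log(L/\epsilon)))$; the explicit factor $m$ in front simply counts the breakpoints. I would state this interpretation explicitly to justify absorbing the additive constant $\log(mL)$ inside the double logarithm, and note that because the error amplification factor $\sum_i|c_i|$ enters only through a double logarithm, the dependence on the coefficient magnitudes is extremely mild.
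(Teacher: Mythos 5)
Your proposal is correct and follows essentially the same route as the paper's proof: decompose $g$ exactly into an affine part plus $m$ shifted ReLU terms, replace each $\relu$ by the rational network of \cref{lem_relu_rat} at a common tolerance, and propagate the error through the coefficients by the triangle inequality. One point in your favour: your bound $\sum_{i=1}^m|c_i|\leq 2mL$ is the correct generic one (the paper asserts $\sum_{j=1}^m|c_j|\leq L$, which fails, e.g., for a sawtooth whose slopes alternate between $\pm L$), so your closing step absorbing the resulting factor $m$ inside the double logarithm --- valid since $\log(\log(mL/\epsilon))=\mathcal{O}(\log(\log(L/\epsilon)))$ as $\epsilon\to 0$ with $g$ (hence $m$ and $L$) fixed --- is precisely the repair needed to recover the stated $\mathcal{O}(m\log(\log(L/\epsilon)))$, and it is harmless where the proposition is applied (in the proof of \cref{th_approx_smooth_rat} each $\psi_{m_k}$ has $\mathcal{O}(1)$ breakpoints).
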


\begin{proof}
Let $0\leq b_1<\cdots<b_M\leq 1$ be the breakpoints of $g$. In a similar way to the proof of~\cite[Proposition 1]{yarotsky2017error}, we first express $\rho$ as the following sum:
\begin{equation} \label{eq_proof_piecewise_rat}
g(x)=c_0\relu(b_1-x)+\sum_{j=1}^mc_j\relu(x-b_j) + c_{m+1},
\end{equation}
for some constants $c_0,\ldots,c_{m+1}\in\mathbb{R}$. Therefore, $g$ can be exactly represented using a ReLU network with $m+1$ nodes and one layer, i.e., 
\[
g(x) = \begin{pmatrix}
c_0 & c_1& \cdots & c_m
\end{pmatrix}
\begin{pmatrix}
\relu(-x+b_1)\\
\relu(x-b_1)\\
\vdots\\
\relu(x-b_m)
\end{pmatrix} + c_{m+1}.
\]
Since $g$ has a Lipschitz constant of $L$, we find that $|c_0|\leq L$ and $\sum_{j=1}^m |c_j| \leq L$. Using~\cref{lem_relu_rat} we can approximate a ReLU function on $[-1,1]$ with tolerance $\epsilon/(2L)$ by a rational network $R_{\relu}$ of size $\mathcal{O}(\log(\log(2L/\epsilon)))$. Now, we construct $R:[0,1]\rightarrow \mathbb{R}$ as a rational network obtained by replacing the ReLU functions in $g$ by $R_{\relu}$. We have the following error estimate: 
\[
\max_{x\in[0,1]} |g(x)-R(x)| \leq |c_0|\|\relu-R_{\relu}\|_{\infty}+\sum_{j=1}^m |c_j| \|\relu-R_{\relu}\|_{\infty} \leq \frac{\epsilon}{2}+\frac{\epsilon}{2}\leq \epsilon.
\]
The result follows as $R$ is of size $\mathcal{O}(m\log(\log(L/\epsilon)))$.
\end{proof}

We remark that the size of the rational network required to approximate a piecewise linear function depends on $\epsilon$. In contrast, ReLU neural networks can represent piecewise linear functions exactly. In the next proposition, we show that a rational neural network can represent $x^n$, for some integer $n$, exactly.

\begin{proposition} \label{prop_rat_x_n}
Let $n\geq 1$, $r_P\geq 2$, and $r_Q\geq 0$.  There exists a rational network $R$, with rational activation functions of type $(r_P,r_Q)$, of size at most $5\lfloor\log_{r_P}(n)\rfloor^2+1$ such that $R(x) = x^n$ for all $x\in\mathbb{R}$.
\end{proposition}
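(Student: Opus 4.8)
The plan is to give an explicit construction of a rational network that computes $x^n$ \emph{exactly}, exploiting the fact that a rational activation of type $(r_P,r_Q)$ with $r_P\ge2$ can realize any monomial $y\mapsto y^{j}$ with $0\le j\le r_P$: a polynomial of degree $\le r_P$ is a rational of type $(r_P,0)$, hence of type $(r_P,r_Q)$, so in particular the denominator is never needed. I would use three gadgets, each built from such activations: the $r_P$-th power map $\pi:y\mapsto y^{r_P}$; the low-degree monomials $y\mapsto y^{d}$ for $0\le d\le r_P-1$; and a multiplication gadget. For the last one, the key observation is that since $y\mapsto y^2$ is an admissible activation, the bilinear identity
\[
uv=\tfrac{1}{4}\bigl[(u+v)^2-(u-v)^2\bigr]
\]
lets the product of any two quantities already present in a layer be computed exactly using two squaring nodes followed by an affine combination. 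This is where the hypothesis $r_P\ge2$ is essential, and it is what makes the construction exact rather than merely approximate.

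With these gadgets in hand I would compute $x^n$ by Horner's scheme in base $r_P$. Writing $n=\sum_{i=0}^{m}d_i\,r_P^{\,i}$ with digits $0\le d_i\le r_P-1$, $d_m\ge1$, and $m=\lfloor\log_{r_P}n\rfloor$, one has
\[
x^n = \Bigl(\cdots\bigl((x^{d_m})^{r_P}\cdot x^{d_{m-1}}\bigr)^{r_P}\cdots\Bigr)^{r_P}\cdot x^{d_0}.
\]
The network evaluates this from the inside out: it maintains a running accumulator together with a forwarded copy of the input $x$ (propagated layer-to-layer by identity nodes), and at stage $i$ it raises the accumulator to the $r_P$-th power via $\pi$, forms $x^{d_i}$ from the carried copy of $x$ via a monomial node, and multiplies the two using the gadget above. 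After the $m+1$ stages the output equals $x^n$ identically on $\mathbb{R}$, so no error analysis is required; this exactness is exactly the property needed when \cref{prop_rat_x_n} is invoked for the monomials in the Taylor expansion of \cref{th_approx_smooth_rat}.

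The remaining work is to organize these operations into valid feed-forward layers and to count nodes against the target $5\lfloor\log_{r_P}n\rfloor^2+1$. The base case $n\le r_P$ is a single node $y\mapsto y^{n}$, which accounts for the additive $1$ when $m=0$. I expect the main obstacle to be bookkeeping rather than any analytic difficulty: the two factors entering each multiplication are produced at different depths, so intermediate powers and the carried copy of $x$ must be synchronized to coincide in the correct layer by inserting identity nodes, and one must verify that every node used is a legitimate type-$(r_P,r_Q)$ activation. Summing the constant per-stage arithmetic cost together with these carry/identity nodes across the $m+1$ Horner stages then yields the claimed bound, the quadratic right-hand side being comfortably large enough to absorb the per-stage constants and the synchronization nodes.
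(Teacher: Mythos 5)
Your construction is correct, and it takes a genuinely different route from the paper's. The paper uses the same base-$r_P$ digits $n=\sum_{\ell=0}^{m}c_\ell r_P^\ell$ (with $m=\lfloor\log_{r_P}n\rfloor$) but decomposes \emph{multiplicatively in parallel}: it writes $x^n=\prod_{\ell=0}^{m}x^{c_\ell r_P^\ell}$, builds each factor as its own chain of $\ell+1$ single-node layers ($y\mapsto y^{r_P}$ composed $\ell$ times with $y\mapsto y^{c_\ell}$), which costs $\sum_{\ell=0}^{m}(\ell+1)=\tfrac12 m^2+\tfrac32 m+1$ nodes, and then multiplies the factors using the identity $xy=(x^2+y^2-(x-y)^2)/2$ (a one-layer, three-node gadget, hence $3m$ further nodes), closing with $\tfrac12 m^2+\tfrac92 m+1\le 5m^2+1$. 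Your Horner scheme instead reuses a single accumulator, so each of the $m$ stages costs only $O(1)$ nodes (one $r_P$-power node, one digit-monomial node, two squaring nodes for your variant $uv=\tfrac14[(u+v)^2-(u-v)^2]$ of the same multiplication trick, plus identity carries for $x$); this yields a network of size $O(m)$, asymptotically smaller than the paper's $\Theta(m^2)$ construction and a fortiori within the stated bound — exactness on all of $\mathbb{R}$ holds in both, which is what \cref{th_approx_smooth_rat} needs. Two small remarks, neither a genuine gap. First, your closing claim that the quadratic bound "comfortably" absorbs the per-stage constants deserves one line of arithmetic: a natural count gives about $6$ nodes per stage ($2$ in the power/monomial/carry layer, plus $2$ squaring nodes and a carry in the multiplication layer, with the initial $x^{d_m}$ and carry), i.e.\ roughly $6m$ nodes total, and $6m\le 5m^2+1$ holds for all $m\ge1$ only because $5m^2-6m+1=(5m-1)(m-1)\ge0$ — with equality at $m=1$, so the fit is tight rather than comfortable unless you shave the final stage's carries; this is worth writing out. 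Second, both you and the paper silently use activations (identity carries, $y\mapsto y^{d}$ with $d<r_P$, squaring) whose exact type is lower than $(r_P,r_Q)$; under the paper's strict definition in \cref{eq_rational} (nonzero leading coefficients) "type" must be read as "type at most", and your explicit observation that a polynomial of degree at most $r_P$ is a rational of type $(r_P,0)$ is precisely the license the paper's own proof takes.
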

\begin{proof}
We start by expressing $n$ in base $r_P$, i.e., 
\[
n=\sum_{\ell=0}^{\lfloor\log_{r_P}\!(n)\rfloor}c_\ell r_P^\ell, \qquad c_\ell\in\{0,1,\ldots,r_P-1\}.
\] 
This means we can represent $x^n$ as
\begin{equation} \label{eq_epr_x_n}
x^n=\prod_{\ell=0}^{\lfloor\log_{r_P}\!(n)\rfloor} x^{c_\ell r_P^\ell}.
\end{equation}
Note that $x^{c_\ell r_P^\ell}$ is just $x^{r_P}$ composed $\ell$ times as well as composed with $x^{c_\ell}$ so can be represented by a rational neural network with $\ell+1$ layers, each with one node. Therefore, all the $x^{c_\ell r_P^\ell}$ terms can be represented in rational networks that in total have size
\[
\sum_{\ell=0}^{\lfloor\log_{r_P}\!(n)\rfloor} \!\!(\ell+1) = \frac{1}{2}(\lfloor\log_{r_P}\!(n)\rfloor)^2 + \frac{3}{2}\lfloor\log_{r_P}\!(n)\rfloor + 1.
\]
The function $x^n$ can be formed by multiplying all the $x^{c_\ell r_P^\ell}$ terms together. Since $xy = (x^2 + y^2 - (x-y)^2)/2$, there is a rational network with one layer and three nodes that represents the multiplication operation. Therefore, multiplying all the terms together requires a rational network of size at most $3\lfloor\log_{r_P}\!(n)\rfloor$ (see \cref{eq_epr_x_n}). The result follows by noting that $x^2/2+9x/2+1\leq 5x^2+1$ for $x\geq 1$. 
\end{proof}

\begin{figure}[htbp]
\centering
\begin{overpic}[width=0.4\textwidth]{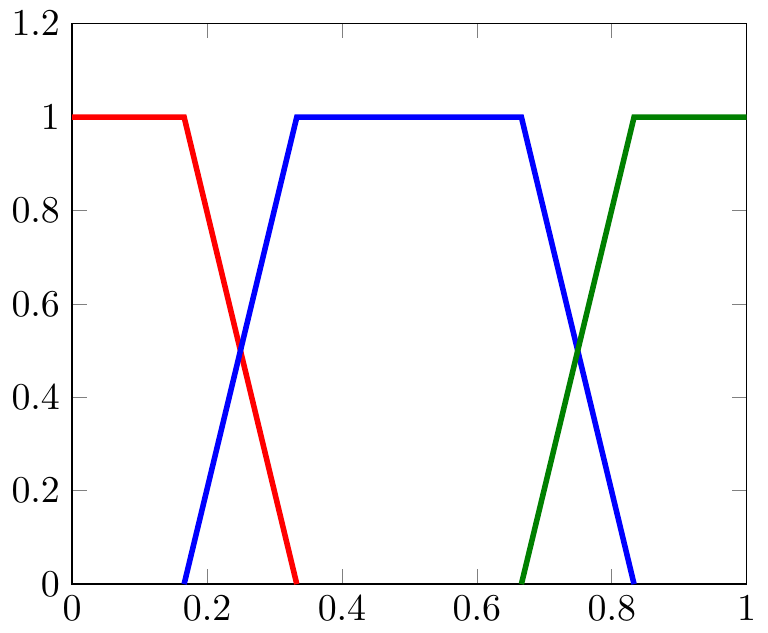}
\put(52,-5){x}
\vspace{0.2cm}
\end{overpic}
\caption{Partition of unity: $\psi_0$ (red), $\psi_1$ (blue), and $\psi_2$ (green), for $N=2$.}
\label{fig_partition}
\end{figure}

We can now prove \cref{th_approx_smooth_rat} using the two previous propositions.

\begin{proof}[Proof of \cref{th_approx_smooth_rat}]
The proof is based on the proof of~\cite[Theorem~1]{yarotsky2017error} and consists of replacing the piecewise linear functions and monomials arising in the local Taylor approximation of the function $f$ by rational networks using the previous approximation results.

Let $N\geq 1$ be an integer and consider a partition of unity of $(N+1)^d$ functions $\phi_\m$ on the domain $[0,1]^d$, i.e.,
\[\sum_{\m\in\{0,\ldots,N\}^d} \phi_\m(\x)=1, \qquad \phi_\m(\x)=\prod_{k=1}^d\psi_{m_k}(x_k), \qquad \x=(x_1,\ldots,x_d),\]
where $\m=(m_1,\ldots,m_d)$, and $\psi_{m_k}$ is given by
\[\psi_{m_k}(x)=
\begin{cases}
1, \quad & \text{if }\left|x_k-\frac{m_k}{N}\right|<\frac{1}{3N},\\
0, \quad & \text{if }\left|x_k-\frac{m_k}{N}\right|>\frac{2}{3N},\\
2-3N\left|x_k-\frac{m_k}{N}\right|, \quad & \text{otherwise}.
\end{cases}
\]
Examples of the functions $\psi_{m_k}$ are shown in~\cref{fig_partition} when $N=2$. We now define a local Taylor approximation of $f$ by 
\[
f_N(\x) =\sum_{\m\in\{0,\ldots,N\}^d}\phi_\m(\x) P_\m(\x),
\]
where $P_\m$ denotes the degree $n-1$ Taylor polynomial of $f$ at $\x=\m/N$. That is, 
\begin{equation} \label{eq_proof_th_4_expansion}
P_\m(\x)=\sum_{|\n|<n} \frac{D^\n f(\tfrac{\m}{N})}{\n !} \left(\x-\frac{\m}{N}\right)^\n,
\end{equation}
where $|\n|=\sum_{k=1}^d n_k$, $\n !=\prod_{k=1}^d n_k!$, and $(\x-\m/N)^\n=\prod_{k=1}^d(x_k-m_k/N)^{n_k}$.  Let $\x\in[0,1]^d$ and note that
\[
\text{support}(\phi_\m)\subset \left\{\x = (x_1,\ldots,x_d):\left|x_k-\frac{m_k}{N}\right|<\frac{1}{N}\right\}, \qquad \m\in\{0,\ldots,N\}^d.
\]
Hence, the approximation error between $f$ and its local Taylor approximation satisfies
\begin{align*}
|f(\x)-f_N(\x)| &= \left|\sum_{\m\in\{0,\ldots,N\}^d} \phi_\m (f(\x)-P_{\m}(\x))\right|\\
&\leq \sum_{\m:\left|x_k-\frac{m_k}{N}\right|<\frac{1}{N}}|f(\x)-P_\m(\x)|\\
&\leq \frac{2^d d^n}{n!}\left(\frac{1}{N}\right)^n \max_{|\n|=n}\esssup_{\x\in[0,1]^d}|D^\n f(\x)|\\
&\leq \frac{2^d d^n}{n!}\left(\frac{1}{N}\right)^n.
\end{align*}
We now select (see~\cite[Theorem~1]{yarotsky2017error} for a similar idea)
\[
N=\left\lceil\left(\frac{n!}{2^dd^n}\frac{\epsilon}{2}\right)^{-1/n}\right\rceil,
\]
so that 
\begin{equation} \label{eq_taylor_approx}
\max_{\x\in[0,1]^d} \left|f(\x)-f_N(\x)\right| \leq \epsilon/2.
\end{equation}
We now approximate the function $f_n$ by a rational network using~\cref{th_approx_piecewise_rat,prop_rat_x_n}. First, we write $f_N$ as
\begin{equation} \label{eq_proof_th_4_expansion_2}
f_N(\x) = \sum_{\m\in\{0,\ldots,N\}^d}\sum_{|\n|<n}a_{\m,\n}\phi_\m(\x)\left(\x-\frac{\m}{N}\right)^\n,
\end{equation}
where $|a_{\m,\n}|\leq 1$ and the monomials are uniformly bounded by $1$ (see \cref{eq_proof_th_4_expansion}). \cref{eq_proof_th_4_expansion_2} consists of at most $d^n(N+1)^d$ terms of the form $\phi_\m(\x)(\x-\m/N)^\n$. The monomial part $(\x-\m/N)^\n$ is representable by a rational network of size $\mathcal{O}(d\log(n)^2)$ using~\cref{prop_rat_x_n}, including the fact that the multiplication is a rational network with one layer and three nodes. 
Let $0<\delta<1$ be a small number, for each $m_k\in\{0,\ldots,N\}$ the piecewise linear function $\psi_{m_k}$ has a Lipschitz constant of $L=3N$. Therefore, it can be approximated with a tolerance $\delta$ by a rational network $\tilde{\psi}_{m_k}$ of size $\mathcal{O}(\log(\log(N/\delta)))$ (see \cref{th_approx_piecewise_rat}). We can assume $\|\tilde{\psi}_{m_k}\|_\infty=1$ by increasing the size of the network by a constant. This yields the following approximation error between a term in~\cref{eq_proof_th_4_expansion_2} and the rational network constructed using $\tilde{\psi}_{m_k}$:
\begin{align*}
&\left|\phi_\m(\x)\left(\x-\frac{\m}{N}\right)^\n-\prod_{k=1}^d\tilde{\psi}_{m_k}(x_k)\left(\x-\frac{\m}{N}\right)^\n\right|
\leq \left|\prod_{k=1}^d \psi_{m_k}(x_k)-\prod_{k=1}^d\tilde{\psi}_{m_k}(x_k)\right|\\
&\leq \left|\psi_{m_1}(x_1)-\tilde{\psi}_{m_1}(x_1)\right| \left|\prod_{k=2}^d \psi_{m_k}(x_k)\right|
+\left|\tilde{\psi}_{m_1}(x_1)\right| \left|\prod_{k=2}^d \psi_{m_k}(x_k)-\prod_{k=2}^d\tilde{\psi}_{m_k}(x_k)\right|\\
&\leq \left|\psi_{m_1}(x_1)-\tilde{\psi}_{m_1}(x_1)\right| 
+\left|\prod_{k=2}^d \psi_{m_k}(x_k)-\prod_{k=2}^d\tilde{\psi}_{m_k}(x_k)\right|\\
&\leq \delta +\left|\prod_{k=2}^d \psi_{m_k}(x_k)-\prod_{k=2}^d\tilde{\psi}_{m_k}(x_k)\right| \leq d\delta.
\end{align*}
Here, the final inequality is derived by repeating the previous inequalities for $x_2,\ldots,x_d$. If we denote by $\tilde{f}_N$ the rational network approximation to $f_N$ constructed above, then, for all $\x\in[0,1]^d$, we have
\begin{align*}
|f_N(\x)-\tilde{f}_N(\x)|&\leq \sum_{\m\in\{0,\ldots,N\}^d}\sum_{|\n|<n}|a_{\m,\n}|\left|\phi_\m(\x)\left(\x-\frac{\m}{N}\right)^\n-\prod_{k=1}^d\tilde{\psi}_{m_k}(x_k)\left(\x-\frac{\m}{N}\right)^\n\right|\\
&\leq 2^d d^{n+1}\delta.
\end{align*}
Therefore, we select $\delta = \epsilon/(2^{d+1} d^{n+1})$ so that $\max_{\x\in [0,1]^d} |f_N(\x)-\tilde{f}_N(\x)|\leq \epsilon/2$. Then, by \cref{eq_taylor_approx}, we have
\[
\max_{\x\in[0,1]^d} \left|f(\x)-\tilde{f}_N(\x)\right| \leq \frac{\epsilon}{2}+\frac{\epsilon}{2}\leq \epsilon.
\]
The statement of the theorem follows as the rational network $\tilde{f}_N$ has size at most
\[
\mathcal{O}(d^n(N+1)^d\log(\log(N/\delta)))=\mathcal{O}(\epsilon^{-d/n}\log(\log(1/\epsilon^{1+1/n})))=\mathcal{O}(\epsilon^{-d/n}\log(\log(1/\epsilon))).
\]

\end{proof}

\subsection{Details of the approximation experiment}
We use the TensorFlow implementation\footnote{We adapt the code that is publicly available~\cite{raissiGit}.} of the deep hidden physics model framework to build and train the identifier network $\mathcal{N}$ that approximates a solution $u$ to the KdV equation.  The true solution is computed on the domain $(x,t)\in[-20,20]\times[0,40]$ by Raissi~\cite{raissi2018deep} using the Chebfun package~\cite{driscoll2014chebfun} with a spectral Fourier discretization of $512$ and a time-step of $\Delta t = 10^{-4}$. Moreover, the solution is stored after every $2000$ time steps, giving a testing data set of approximatively $10^5$ spatio-temporal points in $[-20,20]\times[0,40]$. We then constituted the training and validation sets (of $10^4$ points each) by randomly subsampling the solution at $2\times 10^4$ points in $[-20,20]\times[0,40]$.

\begin{table}[htbp]
  \caption{Initialization coefficients of the rational activation functions.}
  \label{table_coeffs}
  \centering
  \begin{tabular}{ccccccc}
    \toprule
    $a_0$ & $a_1$ & $a_2$ & $a_3$ & $b_0$ & $b_1$ & $b_2$ \\
    \midrule
    $1.1915$ & $1.5957$ & $0.5000$ & $0.0218$ & $2.3830$ & $0.0000$ & $1.0000$ \\
    \bottomrule
  \end{tabular}
\end{table}

In a similar manner to~\cite{raissi2018deep}, we use a fully connected identification network to approximate $u$ with $4$ hidden layers with $50$ nodes per layer. The network is trained using the L-BFGS optimization algorithm with $10,\!000$ iterations. We compare three types of activation functions: ReLU, sinusoid, trainable rational functions of type $(3,2)$, and trainable polynomials of degree $3$. Furthermore, the rational activation functions are initialized to be the best approximation to the ReLU function (see~\cref{sec_experiments}), giving the initial coefficients reported in~\cref{table_coeffs}.

We represent the approximation errors between the different identification networks and the solution to the KdV equation in \cref{fig_2d_approx}. 

\begin{figure}[htbp]
\centering
\vspace{0.3cm}
\begin{overpic}[width=0.95\textwidth]{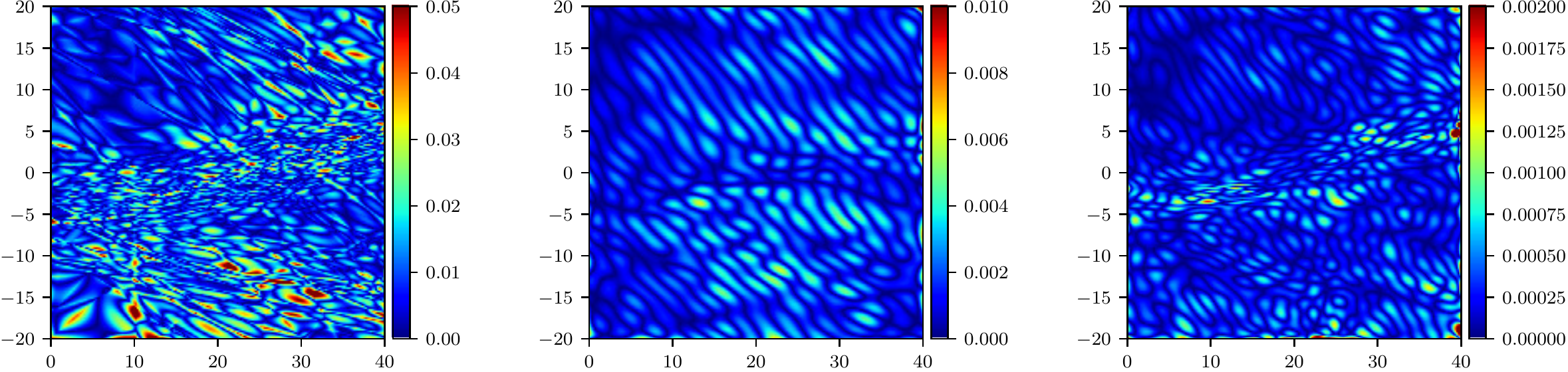}
\put(11,25){ReLU}
\put(13.3,-2){t}
\put(-1.3,12){x}
\put(44,25){Sinusoid}
\put(47.7,-2){t}
\put(33,12){x}
\put(78,25){Rational}
\put(82,-2){t}
\put(67,12){x}
\end{overpic}
\vspace{0.1cm}
\caption{Approximation errors of the neural networks with ReLU, sinusoid, and rational activation layers.}
\label{fig_2d_approx}
\end{figure}

Finally, in \cref{fig_rational_loss_2d}, we compare rational neural networks with different degree activation functions (each initialized to approximate the ReLU function using the MATLAB code 
\texttt{initial\_rational\_coeffs.m} available at~\cite{boulleGit}) and find that they all performed better than ReLU networks. While a type $(3,2)$ rational offers a good trade-off between the number of parameters and quality of approximation according to the theoretical results presented in \cref{sec_th_result}, the type of rational function might well depend on the application considered.

\begin{figure}[htbp]
\centering
\begin{overpic}[width=0.44\textwidth, trim=30 15 45 25,clip]{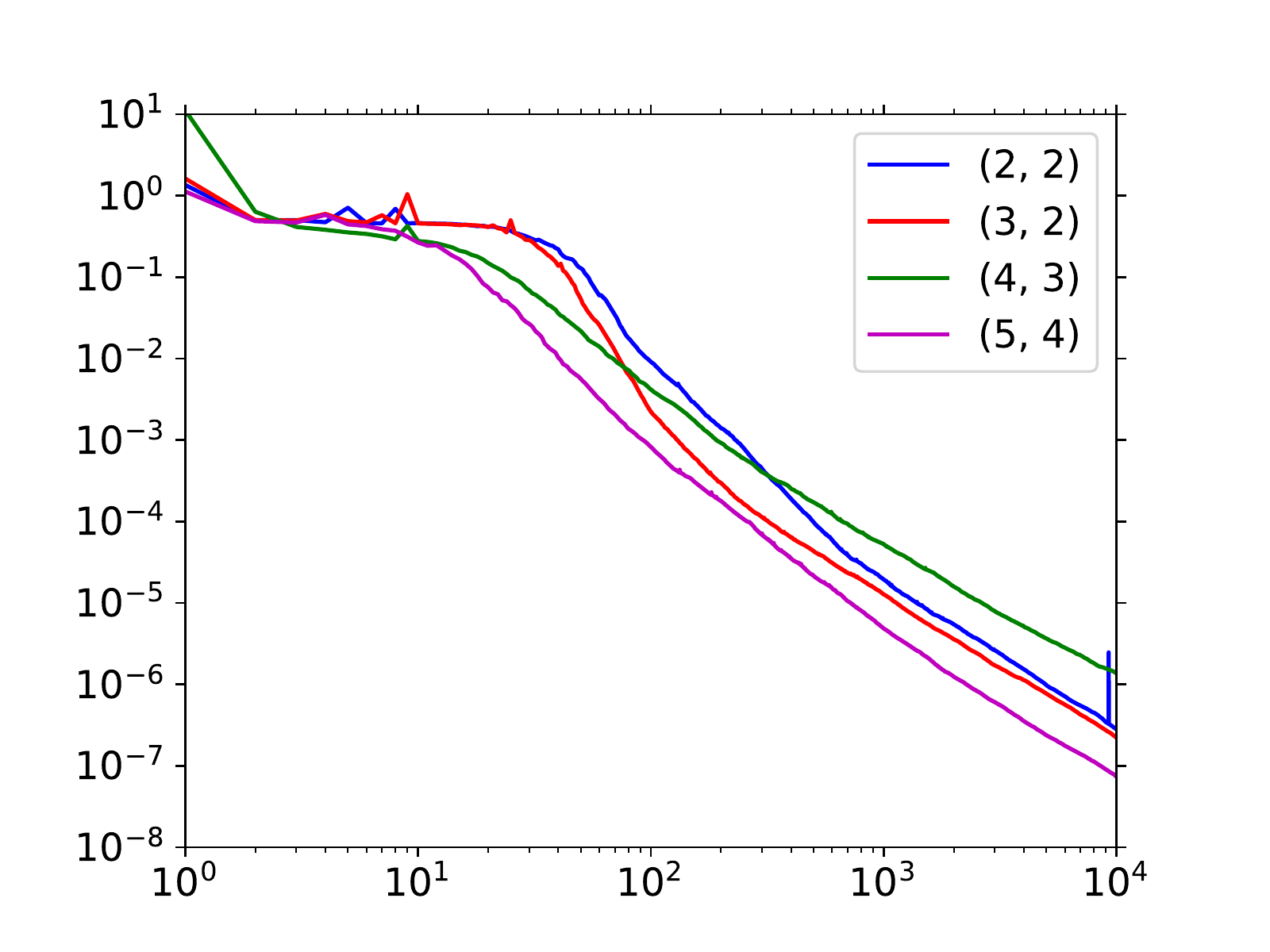}
\put(45,-5){Epochs}
\put(-9,23){\rotatebox{90}{Validation loss}}
\end{overpic}
\vspace{0.2cm}
\caption{Validation loss of rational networks of types $(2,2)$, $(3,2)$, $(4,3)$, and $(5,4)$ with respect to the number of epochs.}
\label{fig_rational_loss_2d}
\end{figure}

\subsection{Details of the GAN experiment}

We adapt the Keras example in~\cite{KerasDoc} to train an Auxiliary Classifier GAN with rational activation functions on the MNIST. The hyper-parameters used for the GAN experiment are given in~\cref{table_GAN}. Moreover, the GAN is trained on $20$ epochs with a batch size of $100$ by Adam's optimization algorithm~\cite{kingma2014adam} and the following parameters: $\alpha=0.0002$ and $\beta_1=0.5$, as suggested by~\cite{radford2015unsupervised}.

\begin{table}[htbp]
  \caption{Hyper-parameters of the GAN experiment, BN denotes the presence of a Batch normalization layer. The Generator and Discriminator networks are trained with ReLU and rational activation functions, initialized with the coefficients reported in \cref{table_coeffs}.}
  \label{table_GAN}
  \centering
  \begin{tabular}{lcccccc}
    \toprule
     \multicolumn{1}{c}{Operation} & Kernel & Strides & Features & BN & Dropout & Activation \\
    \midrule
    \multicolumn{1}{c}{Generator} \\    
    Linear & N/A & N/A & 3456 & \xmark & 0.0 & ReLU / Rational\\
    Transposed Convolution & $5\times 5$ & $1\times 1$ & 192 & \cmark & 0.0 & ReLU / Rational\\
    Transposed Convolution & $5\times 5$ & $2\times 2$ & 96 & \cmark & 0.0 & ReLU / Rational\\
    Transposed Convolution & $5\times 5$ & $2\times 2$ & 1 & \xmark & 0.0 & Tanh\\
    \multicolumn{1}{c}{Discriminator} \\    
    Convolution & $3\times 3$ & $2\times 2$ & 32 & \xmark & 0.3 & Leaky ReLU / Rational\\
    Convolution & $3\times 3$ & $1\times 1$ & 64 & \xmark & 0.3 & Leaky ReLU / Rational\\
    Convolution & $3\times 3$ & $2\times 2$ & 128 & \xmark & 0.3 & Leaky ReLU / Rational\\
    Convolution & $3\times 3$ & $1\times 1$ & 256 & \xmark & 0.3 & Leaky ReLU / Rational\\
    Linear & N/A & N/A & 11 & \xmark & 0.0 & Soft-Sigmoid\\
    \bottomrule
  \end{tabular}
\end{table}

We report in~\cref{fig_MNIST_gan_supp} samples of the $10$ classes present in the MNIST dataset (right) and images generated at the $20$th epoch by the GAN with ReLU/Leaky ReLU units (left) and rational activation functions (middle).

\begin{figure}[htbp]
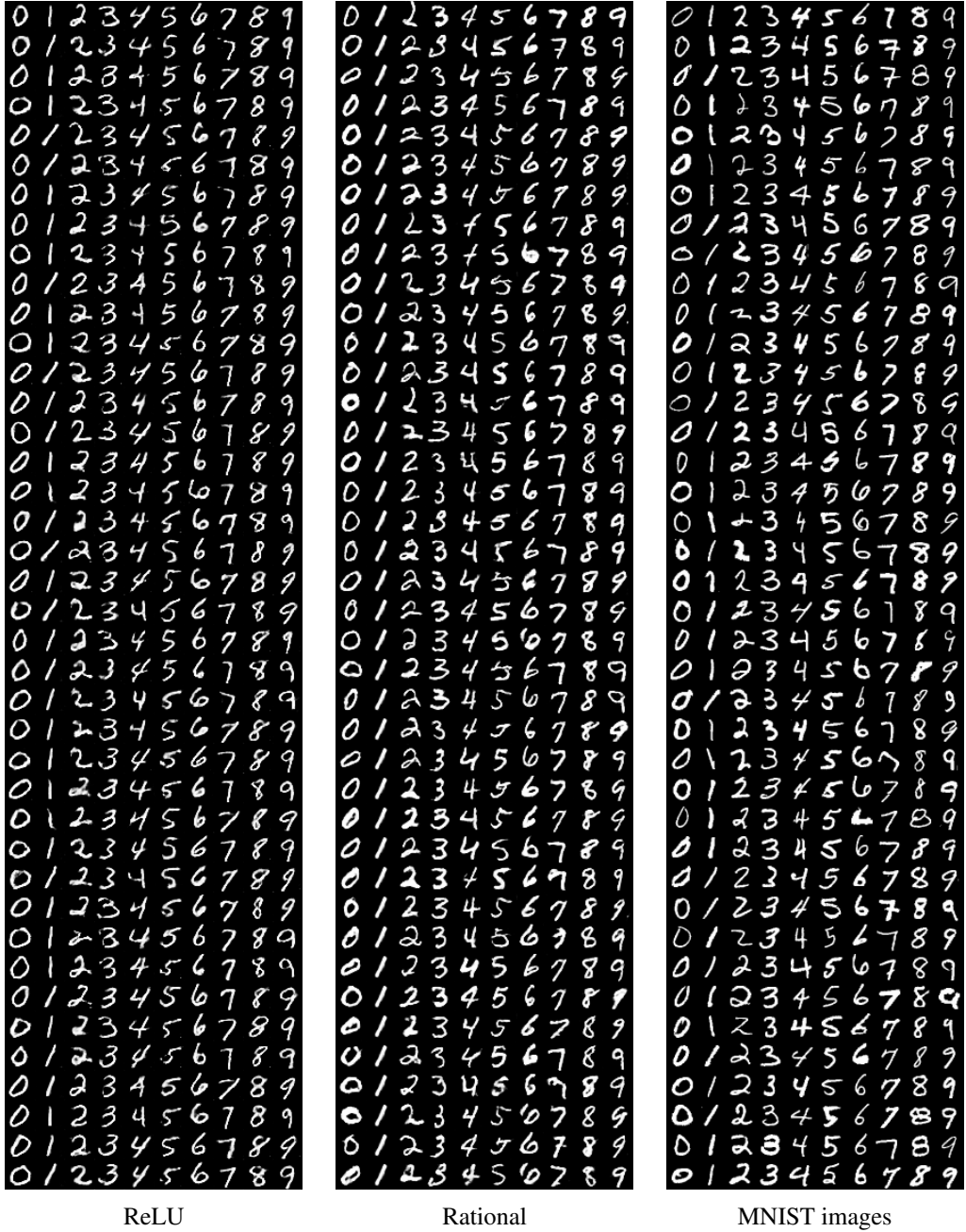

\centering
\begin{overpic}[width=0.3\textwidth,trim={0 0 0 0},clip]{Figure/gan/relu/plot_epoch_020_generated.png}
\put(10,-3){ReLU}
\end{overpic}
\hspace{0.3cm}
\begin{overpic}[width=0.3\textwidth,trim={0 0 0 0},clip]{Figure/gan/rat/plot_epoch_020_generated.png}
\put(9,-3){Rational}
\end{overpic}
\hspace{0.3cm}
\begin{overpic}[width=0.3\textwidth,trim={0 0 0 0},clip]{Figure/gan/mnist_images.png}
\put(6,-3){MNIST images}
\end{overpic}
\vspace{0.5cm}
\caption{Forty images generated by a ReLU network and a rational network after 20 epochs, together with real images from the MNIST dataset.}
\label{fig_MNIST_gan_supp}
\end{figure}

\end{document}